\definecolor{cvprblue}{rgb}{0.21,0.49,0.74}
\title{Steerers: A framework for rotation equivariant keypoint descriptors}
\author{
      Georg Bökman$^\dag$\qquad Johan Edstedt$^\ddag$\qquad Michael Felsberg$^\ddag$\qquad Fredrik Kahl$^\dag$
     \\
     $^\dag$Chalmers University of Technology\qquad $^\ddag$Linköping University 
}
\begin{document}

\twocolumn[{%
\renewcommand\twocolumn[1][]{#1}%
\maketitle
\centering
\includegraphics[width=.497\textwidth]{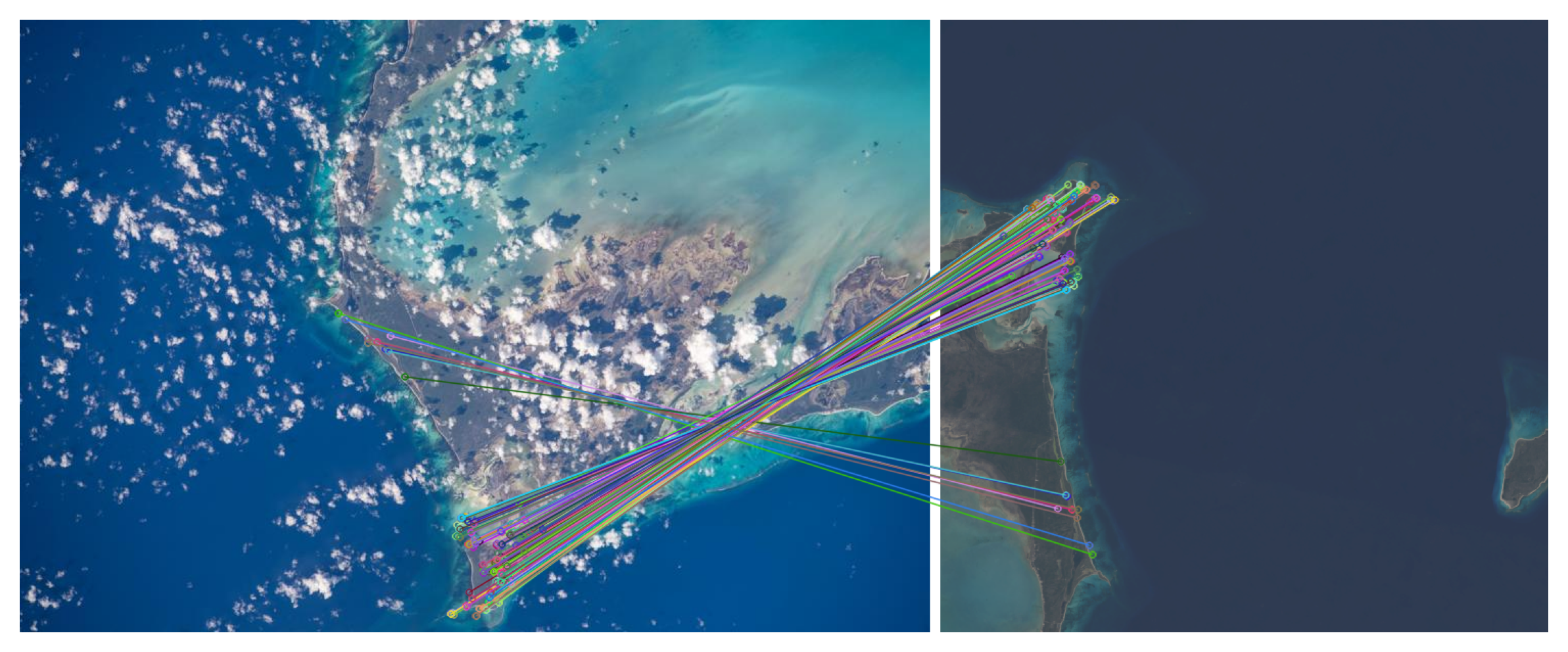}
\includegraphics[width=.497\textwidth]{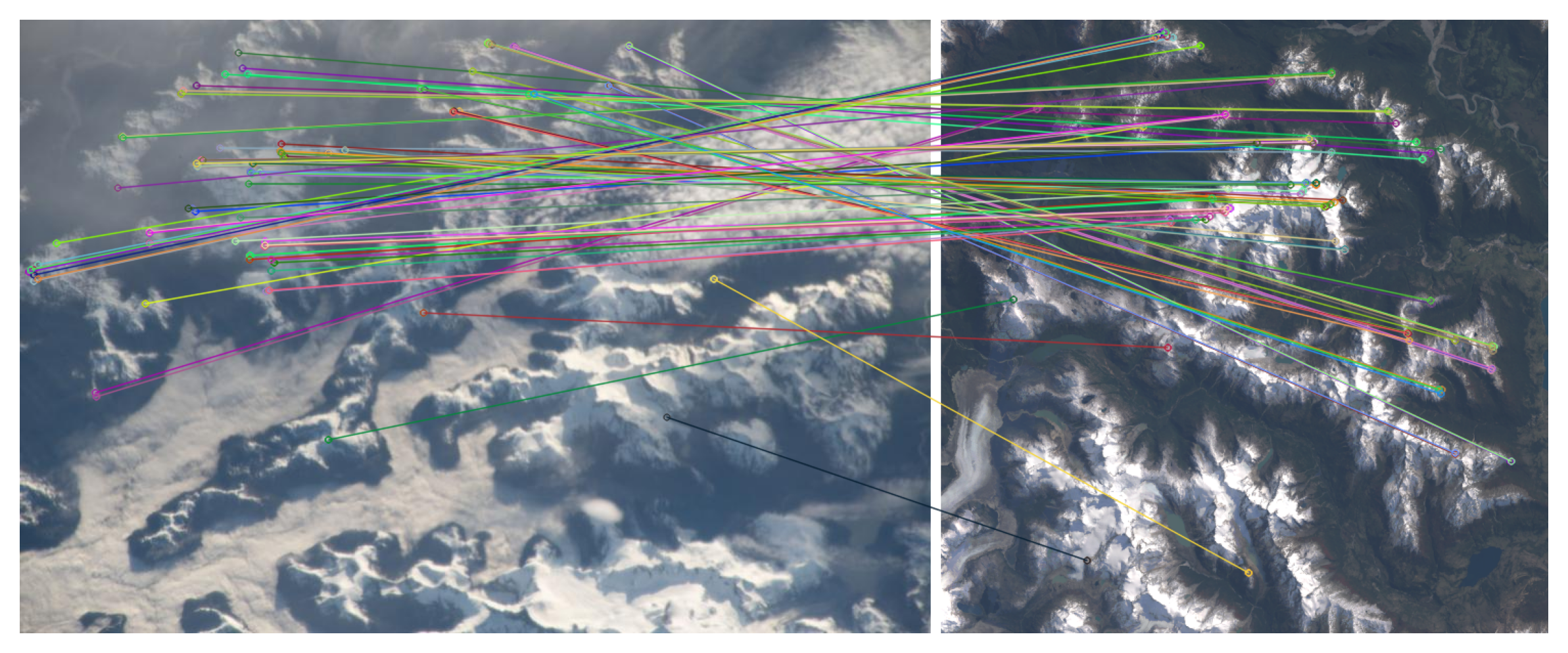} \vspace{-0.1cm}

\vspace{-1.5em}
\captionof{figure}{\textbf{Matching under large in-plane rotations.}
Two challenging pairs from AIMS~\cite{stoken2023astronaut}. 
The left images in each pair were taken by astronauts on the ISS and are geo-referenced by matching them with the satellite images on the right.
We plot estimated inlier correspondences after homography estimation with RANSAC.
Further qualitative examples are shown in the appendix.
\vspace{3em}
}
\label{fig:teaser}}]

\begin{abstract} 
Image keypoint descriptions that are discriminative and matchable over large changes in viewpoint are vital for 3D reconstruction. However, descriptions output by learned descriptors are typically not robust to camera rotation. 
While they can be made more robust by, e.g., data augmentation, this degrades performance on upright images. Another approach is test-time augmentation, which incurs a significant increase in runtime.
Instead, we learn a linear transform in description space that encodes rotations of the input image.
We call this linear transform a steerer since it allows us to transform the descriptions as if the image was rotated.
From representation theory, we know all possible steerers for the rotation group.
Steerers can be optimized (A) given a fixed descriptor, (B) jointly with a descriptor or (C) we can optimize a descriptor given a fixed steerer.
We perform experiments in these three settings
and obtain state-of-the-art results on the rotation
invariant image matching benchmarks AIMS and Roto-360.
We publish code and model weights at \href{https://github.com/georg-bn/rotation-steerers}{this https url}.
\end{abstract}    
\vspace{-0.2cm}
\section{Introduction}
\label{sec:intro}

Discriminative local descriptions are vital for multiple 3D vision tasks, and learned descriptors have recently been shown to outperform traditional handcrafted local features~\citep{detone2018superpoint,revaud2019r2d2,gleize2023silk,edstedt2024dedode}. 
One major weakness of learned descriptors compared to handcrafted features such as SIFT~\citep{lowe2004distinctive} is the relative lack of robustness to non-upright images~\cite{tyszkiewicz2020disk}. While images taken from ground level can sometimes be made upright by aligning with gravity as the canonical orientation, this is not always possible. For example, descriptors robust to rotation are vital in space applications~\cite{stoken2023astronaut},
as well as medical applications~\cite{pielawski2020comir}, where no such canonical orientation exists. Even when a canonical orientation exists, it may be difficult or impossible to estimate. Rotation invariant matching is thus a key challenge.

The most straightforward manner to get rotation invariant matching is to train or design a descriptor to be rotation invariant~\cite{lowe2004distinctive, detone2018superpoint}.
However, this sacrifices distinctiveness in matching
images with small relative rotations~\cite{pautrat2020lisrd}. An alternative approach is to train a rotation-sensitive descriptor and perform test-time-augmentation, %
selecting the pair that produces the most matches. The obvious downside of TTA is computational cost. For example, testing all $45^{\circ}$ rotations requires running the model eight times.

\begin{figure}
    \centering
    \includegraphics[width=0.9\linewidth]{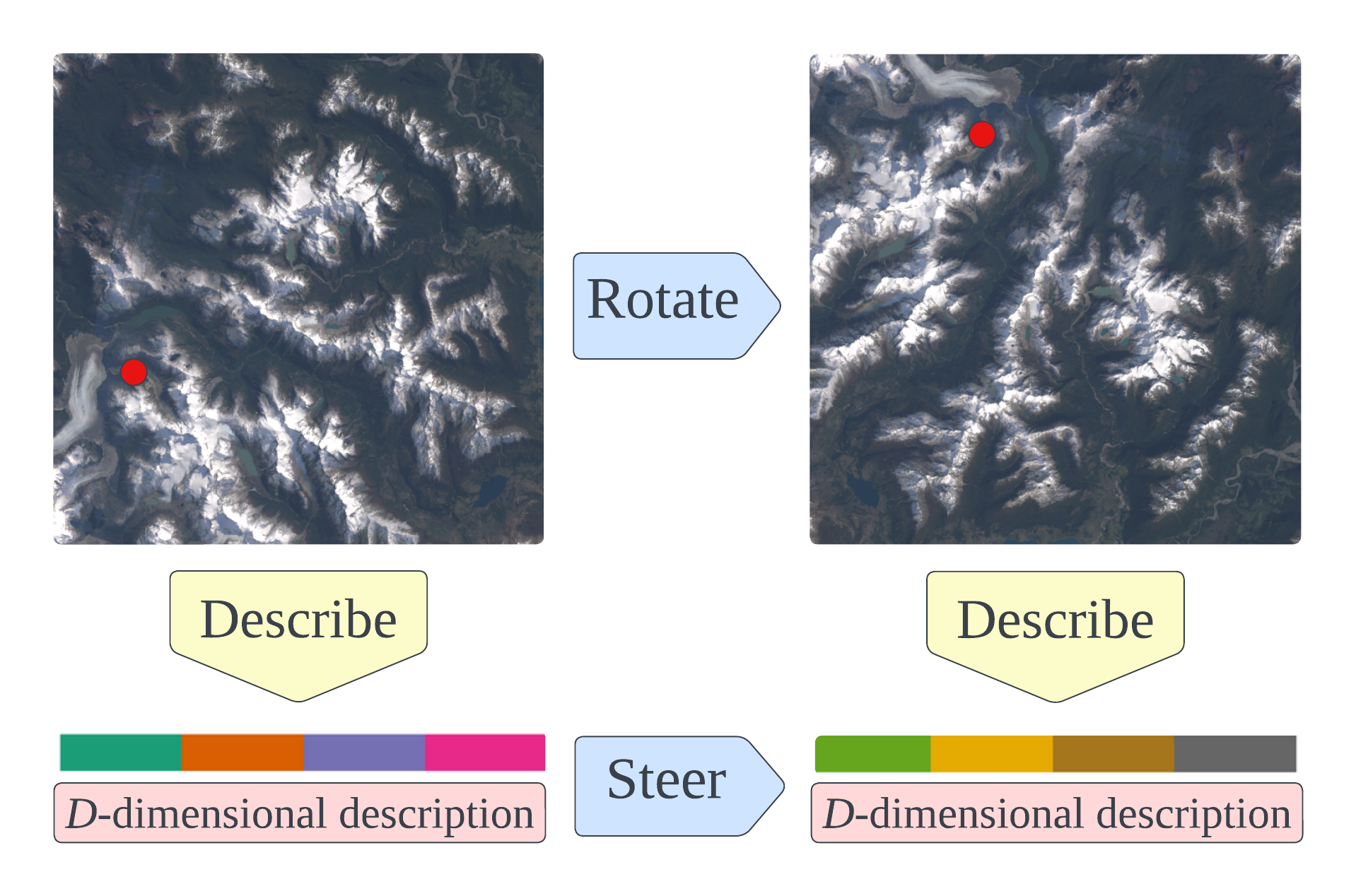} %
    \caption{\textbf{Overview of approach.} 
        A steerer (Definition~\ref{def:steerer}) is a linear map that transforms
        the description of a keypoint into the description
        of the corresponding keypoint in a rotated image.
        Thus, a steerer makes the keypoint descriptor rotation equivariant, and we can obtain the descriptions of keypoints in arbitrarily rotated images while only running the descriptor once.
    }
    \label{fig:approach}
\end{figure}

In this paper, we present an approach
that maintains distinctiveness for small rotations and 
allows for rotation invariant matching when we have images with large rotations. We do this while adding only negligible additional runtime, running the descriptor only a single time. The main idea is to learn a linear transform in description space that corresponds to a rotation of the input image; see Figure~\ref{fig:approach}.
We call this linear transform a \emph{steerer} as it allows us to modify keypoint descriptions as if they
were describing rotated images---we can \emph{steer} the descriptions without having to rerun the descriptor network.
We show empirically that
approximate steerers can be obtained for existing descriptors and motivate this theoretically.
We also investigate jointly optimizing steerers and descriptors and show how this enables nearly exact steering while not sacrificing performance on upright images.
Using mathematical representation theory,
we can describe all possible steerers---they are representations of the rotation group.
This enables the choice of a fixed steerer and training a descriptor for it, and in turn, the investigation of which steerers give the best performance.

Using our framework, we set a new state-of-the-art on the rotation invariant matching benchmarks AIMS~\cite{stoken2023astronaut} (Figure~\ref{fig:teaser}) and Roto-360~\cite{lee2023learning}. At the same time, we are with the same models able to perform on par with or even outperform existing non-invariant methods on upright images on the competitive MegaDepth-1500 benchmark~\cite{li2018megadepth,sun2021loftr}.

In summary, our main contributions are as follows.
\begin{enumerate}
    \item We introduce a new framework of steerers for equivariant keypoint descriptors (Section~\ref{sec:equiv_steer}) and theoretically motivate why steerers emerge in practice (Section~\ref{sec:desc_and_steer}).
    \item We develop several settings for investigating steerers (Section~\ref{sec:settings}) and ways to apply them for rotation invariant matching (Section~\ref{sec:matching_strat}).
    \item We conduct a large set of experiments, culminating in state-of-the-art on AIMS and Roto-360 (Section~\ref{sec:experiments}).
\end{enumerate}

\section{Related work}
\label{sec:related_work}

Classical keypoint descriptions are typically made rotation invariant by using keypoints with associated local rotation frames and computing the descriptions in these frames.
Examples include SIFT~\cite{lowe2004distinctive}, SURF~\cite{bay2008surf}, and ORB~\cite{rublee2011orb}.
A canonical rotation frame can be used for patch-based neural network descriptors as well~\cite{tian2019sosnet, tian2020hynet}.
Further, neural network-based approaches have been proposed for estimating the keypoint rotation frame~\cite{lee2021self, lee2022self, mishkin2018repeatability} and for both computing the rotation frame and the descriptions in that frame \cite{yi2016lift, lee2023learning}.
Notably, \cite{lee2022self, lee2023learning} use rotation equivariant ConvNets~\cite{cohen2016group, weiler2019general, Worall_2017_CVPR}.
Equivariant ConvNets have also been used
for rotation-robust keypoint detection without estimating the rotation frame~\cite{santellani2023s, bagad2022c}, keypoint description~\cite{bagad2022c, liu2019gift} and end-to-end image matching~\cite{bokman2022case}.
In theory\footnote{
It has been demonstrated that equivariant ConvNets can learn to break equivariance~\cite{edixhoven2023using} when this benefits the task at hand.
\Eg, the end-to-end matcher SE2-LoFTR~\cite{bokman2022case} is not perfectly consistent over rotations~\cite{bokman2022case,stoken2023astronaut}.
}, equivariant ConvNets guarantee that the predictions are consistent when rotating the image. They are one example of hard-coding equivariance into network layers using mathematical group theory, an idea that goes back to the 1990's~\cite{wood1996rep}
and has seen large recent interest~\cite{finzi2021practical,bronstein2021geometric,gerkenGeometricDeepLearning2023}.

Neural networks can also be encouraged to learn equivariance rather than having it hard-coded in the layers.
This can be done by enforcing group-specific invariants in the network output space~\cite{shakerinava2022structuring, gupta2023structuring} (see also Section~\ref{sec:desc_and_steer}).
Another approach is to specify a group representation on the output of the network and train the network to satisfy equivariance wrt.~that  representation~\cite{cohen2014transformation, Worrall_2017_ICCV, marchetti2023equivariant, koyama2023nft}.
We will use this approach for keypoint descriptions in our Setting C.
The benefits of not hard-coding equivariance are that arbitrary network architectures can be used (particularly pre-trained non-equivariant networks) and that one does not need to specify the group representations acting on each layer of the network.
A special case of learning equivariance is rotation invariant descriptors through data augmentation~\cite{ono2018lf, tian2017l2}.

A recent line of work~\cite{Lenc_2015_CVPR, gruver2023the, bokman2023investigating, Bruintjes_2023_CVPR} investigates to what extent neural networks exhibit equivariance without having been trained or hard-coded to do so.
They find that many networks are approximately equivariant.
One major limitation is that they only consider networks trained for image classification.
We will empirically demonstrate a high level of equivariance in keypoint descriptors that were not explicitly trained to be equivariant and theoretically motivate why this happens (our Setting A).

\section{Preliminaries}
In this work, we are interested in finding linear mappings between keypoint descriptions where the images may have been rotated independently.
We will, in particular, consider the group of quarter rotations $C_4$ and
the group of continuous rotations $\mathrm{SO}(2)$.

Ordinary typeset $g$ will denote an arbitrary group element,
boldface $\mathbf{g}$ will always mean the generator of $C_4$ for the remainder of the text so that the elements of $C_4$ are $\mathbf{g}$, $\mathbf{g}^2$, $\mathbf{g}^3$ and the identity element $\mathrm{id}=\mathbf{g}^4$.
Boldface $\mathbf{i}$ will denote the imaginary unit such that $\mathbf{i}^2=-1$.
Given matrices $X_1, X_2, \ldots, X_J$, the notation $\oplus_{j=1}^J X_j$ will mean
the block-diagonal matrix with blocks $X_1, X_2, \ldots, X_J$.

\subsection{Preliminaries on keypoint matching}
\label{sec:intro:matching}
The underlying task %
is to take two images of the same scene
and detect 2D points %
that correspond to the same 3D point.
A pair of such points that depict the same 3D point is called a correspondence.
The approach for finding correspondences that will be explored %
is a three-stage approach:
\begin{enumerate}
    \item Detection. Detect $N$ keypoint locations in each image.
    \item Description. Describe the keypoint locations with descriptors, \ie, feature vectors in $\mathbb{R}^D$.
    \item Matching. Match the descriptors, typically by using mutual nearest neighbours in cosine distance.
\end{enumerate}
This classical setup includes SIFT \cite{lowe2004distinctive} but also more recent deep learning-based approaches.
In particular, we follow the method in DeDoDe~\cite{edstedt2024dedode},
where the keypoint detector is first optimized to find good point tracks from SfM reconstructions and the keypoint descriptor is optimized by maximizing the matching likelihood %
obtained by a frozen keypoint detector as follows.
If the $N$ descriptors (each normalized to unit length) in the two images are $y_1\in\mathbb{R}^{D\times N}$ and
$y_2\in\mathbb{R}^{D\times N}$ 
we first form the $N\times N$ matching matrix $Y = y_1^T y_2$, and
obtain a matrix of 
pairwise likelihoods by using the dual softmax \cite{rocco2018neighbourhood, tyszkiewicz2020disk, sun2021loftr}:
\begin{equation} \label{eq:match_prob}
    p(y_1, y_2) =
    \frac{\exp (\iota Y)}{\sum_{\text{columns}}\exp (\iota Y)}
    \cdot \frac{\exp (\iota Y)} {\sum_{\text{rows}}\exp (\iota Y)}.
\end{equation}
Here $\iota=20$ is the inverse temperature.
The negative logarithm of the likelihood \eqref{eq:match_prob} is minimized
for those pairs in the $N\times N$ matrix that correspond to
ground truth inliers.

\subsection{Preliminaries on group representations}
\label{sec:intro:group}

\begin{definition}(Group representation)
Given a group $G$, a representation of $G$ on $\mathbb{R}^D$ is a mapping
$
    \rho: G \to \text{GL}(\mathbb{R},D)
$
that preserves the group multiplication, \ie, $\rho(g g') = \rho(g)\rho(g')$ for every $g, g'\in G$.
\end{definition}

Simply stated, $\rho$ maps every element in the group to an invertible $D\times D$ matrix.
The point of using representations is that groups such as $C_4$ act differently on different quantities
as we will illustrate in the following examples.

\begin{exmp}\label{ex:repimage}
    For $\mathbb{R}^{n\times n}$ (a square image grid), $C_4$ can be represented by permutations of the pixels in the obvious way so that the image is rotated anticlockwise by multiples of $\ang{90}$.
    We denote this group representation by
    $P_{90}$ so %
    that applying $P_{90}^k$ rotates
    the image by $k\cdot\ang{90}$ anticlockwise.
\end{exmp}

\begin{exmp}\label{ex:repcoord}
    For $\mathbb{R}^2$ (image coordinates), one possible representation of $C_4$ is 
    $
    \rho(\mathbf g^k) = R_{90}^k = \begin{psmallmatrix}
        0 & -1 \\
        1 & 0 \\
    \end{psmallmatrix}^k.
    $
    Multiplication by $R^k_{90}$ corresponds to rotating image coordinates by $k\cdot\ang{90}$ if the center of the image is taken as $(0, 0)$.
\end{exmp}

\begin{figure}
    \centering
    \includegraphics[width=\linewidth]{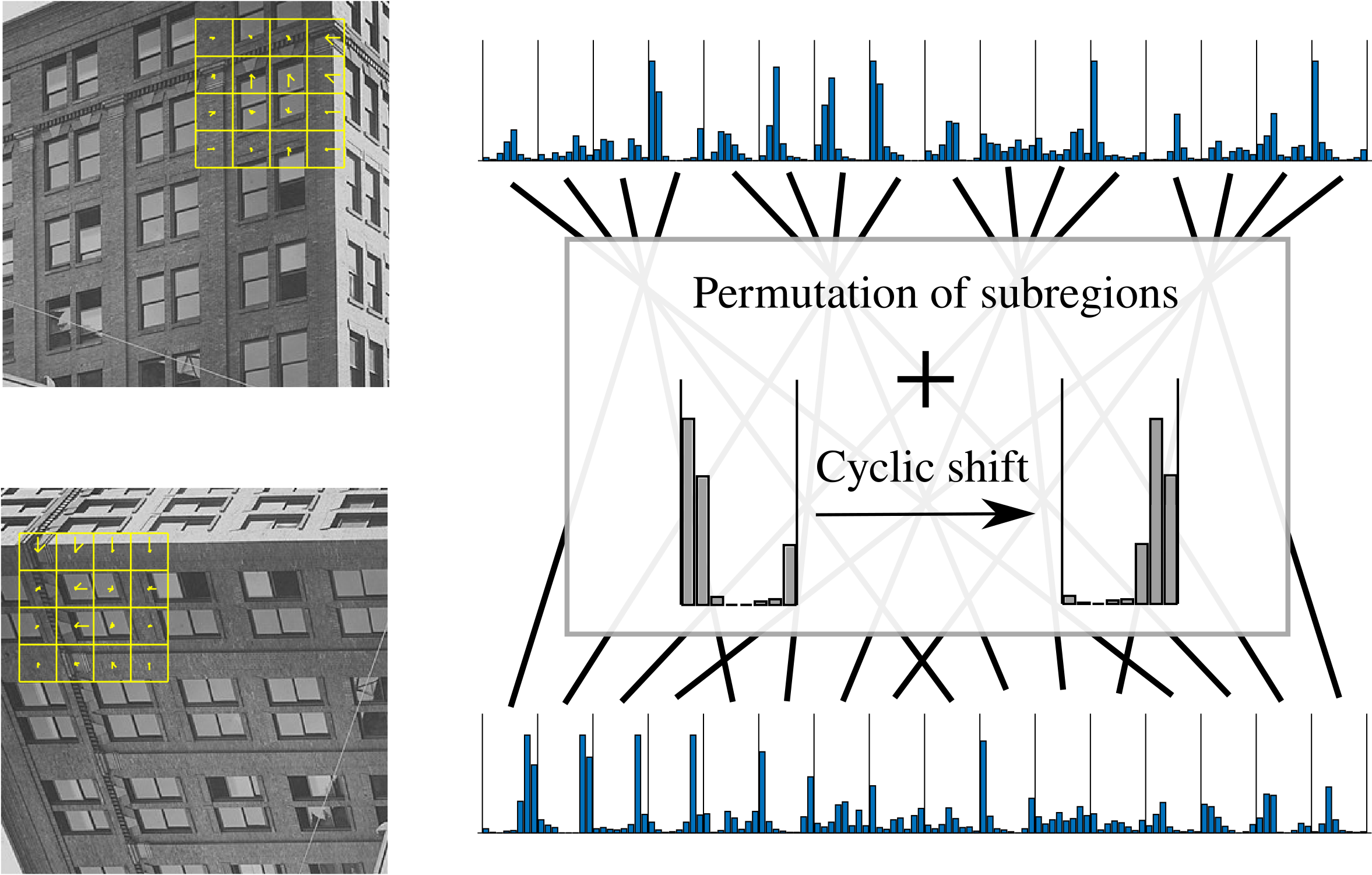}
    \caption{\textbf{Equivariance of Upright SIFT.} Left: A keypoint with its Upright SIFT description in an upright image and a rotated version. The small yellow squares are the subregions where histograms of gradient orientations are computed.
    Right: The Upright SIFT descriptions %
    unravelled into the 128 bin histograms that constitute them. When we rotate the image, the subregions 
    are permuted, and the histogram bins within each subregion are further permuted cyclically.
    Hence, Upright SIFT is rotation equivariant.
    }
    \label{fig:upsift}
\end{figure}

\section{Equivariance and steerability}
\label{sec:equiv_steer}
In this section, we will analyze the close connection between 
equivariance and steerability. We start with an example to introduce the former concept.

\begin{exmp}\label{ex:repsift}
    SIFT descriptions~\cite{lowe2004distinctive} are $128$ dim. vectors
    designed to be invariant to rotation, scale and illumination and highly distinctive for leveraging feature matching.
    For an input image $I\in\mathbb{R}^{n\times n}$ and $N$ keypoints with scale and orientation $x\in\mathbb{R}^{4\times N}$\footnote{The first two coordinates of each keypoint in $x$ are its location and the last two a vector for its orientation and scale, so $x$ is rotated by $\oplus_{b=1}^2 R_{90}$.},
    we get descriptions $y\in\mathbb{R}^{128\times N}$.
    If $f$ is the SIFT descriptor, we write $f(I, x) = y$.
    The descriptions consist of histograms of image gradients over patches around the keypoints $x$.
    The patches are oriented by the keypoint orientations so that
    the descriptions are invariant to joint rotations of the image and keypoints:
    \[
        f\left(P_{90}^k I, (\oplus_{b=1}^2 R_{90})^k x\right) = f(I, x).
    \]
    If we discard the keypoint orientations, \ie, set the angle of each keypoint to $0$, 
    we get the Upright SIFT (UPSIFT) descriptor~\cite{bay2006surf,baatz2010handling}, which is often used for upright images as it is more discriminative than SIFT.
    When we rotate an image $\ang{90}$, then the gradient histograms,
    \ie, the UPSIFT descriptions are permuted by a specific permutation $P_\text{UPSIFT}$,
    so if $f$ is the UPSIFT descripor, we have
    \[
        f\left(P_{90}^k I, (\oplus_{b=1}^2 R_{90})^k x\right) = P_\text{UPSIFT}^k f(I, x).
    \]
    We illustrate the permutation $P_\text{UPSIFT}$ in Figure~\ref{fig:upsift}.
    UPSIFT is not rotation invariant, but it is rotation \emph{equivariant}---when we rotate the input, the output changes predictably.
    Explicitly, the representation %
    is $\rho(\mathbf g^k)=P_\text{UPSIFT}^k$.
\end{exmp}

\begin{definition}[Equivariance]
We say that a function $f: V\to W$ is equivariant with respect to a group $G$ if
\begin{equation}
    \rho(g) f(v) = f(\rho_{\text{in}}(g) v), \forall v \in V, g\in G,
\end{equation}
for some group representations $\rho_{\text{in}}, \rho$.
\end{definition}

This work will mainly be concerned with the equivariance of learned keypoint descriptors of ordinary keypoints (without scale and orientation).

\begin{definition}[Equivariance of keypoint descriptor]
We say that a keypoint descriptor $f$ is equivariant with respect to a group $G$ transforming the input image by %
$\rho_{\text{image}}$
and the input keypoint locations by %
$\rho_\text{keypoint}$
if there exists %
$\rho$ such that
\begin{equation}
    \rho(g) f(I, x) = f(\rho_{\text{image}}(g) I, \rho_\text{keypoint}(g) x)
\end{equation}
for all images, keypoints and group elements.
We call the descriptor invariant if $\rho(g)$ is the identity matrix for all $g$. Invariance is a special type of equivariance.
\end{definition}

\begin{exmp}
A keypoint descriptor $f$ is equivariant under $\ang{90}$ rotations if there exists
$\rho$ of $C_4$ such that
\begin{equation} \label{eq:equiv_desc}
    \rho(\mathbf g^k) f(I, x) = f(P_{90}^k I,R_{90}^k x)
\end{equation}
for $k\in\{0,1,2,3\}$,
where $P_{90}$ and $R_{90}$ are the representations from Examples \ref{ex:repimage} and \ref{ex:repcoord} that rotate images and coordinates in the ordinary manner.

Both SIFT and Upright SIFT are equivariant.
For SIFT, $\rho(\mathbf g^k)$ is the identity, so SIFT is invariant.
For Upright SIFT, $\rho(\mathbf g^k)$ is $P_\text{UPSIFT}^k$ as explained in Example~\ref{ex:repsift}.
\end{exmp}

One aim of this work is to argue and demonstrate that learned keypoint descriptors, which are
trained on upright data, will behave more like Upright SIFT than SIFT, \ie, they will be rotation equivariant but not invariant.

\begin{definition}[Steerability, adapted from \cite{freeman1991design}]
\label{def:steerability}
A real-valued function $\phi:V\to \mathbb{R}$ is said to be steerable under a representation $\rho_\text{in}$ of $G$ on $V$,
if there exist $D$ functions (for some $D$) $\phi_j : V \to \mathbb{R}$
and $D$ functions $\kappa_j : G \to \mathbb{R}$ such that
$
    \phi(\rho_\text{in}(g) v) = 
    \sum_{j=1}^D \kappa_j(g) \phi_j(v).
$
\end{definition}

Note that an equivariant function $f:V\to \mathbb{R}^D$ satisfies in each component $f_d$ that 
$
    f_d(\rho_\text{in}(g)v) = \sum_{j=1}^D\rho(g)_{dj}f_j(v),
$
so each component of $f$ is steerable, in the notation of Definition~\ref{def:steerability}, $\phi=f_d, \phi_j=f_j, \kappa_j(g)=\rho(g)_{dj}$.
This motivates the definition of a \emph{steerer}. %

\begin{definition}[Steerer]
\label{def:steerer}
    Given a function $f:V\to W$ between vector spaces, and a representation $\rho_\text{in}$
    of $G$ on $V$,
    a steerer is a representation $\rho$ of $G$ on $W$ that makes $f$ equivariant, \ie such that
    \begin{equation}\label{eq:steerer}
        f(\rho_\text{in}(g) v) = \rho(g) f(v).
    \end{equation}
    Even if \eqref{eq:steerer} only holds approximately or $\rho$ is only approximately a representation, we will refer to $\rho$ as a steerer.
\end{definition}

We will use the verb \emph{steer} for multiplying a feature/description by a steerer; see Figure~\ref{fig:approach} for the broad idea.

\begin{exmp}
    As explained in Example~\ref{ex:repsift}, $P_\text{UPSIFT}$ is a steerer for Upright SIFT under $\ang{90}$ rotations.
    This has practical consequences.
    If we want to obtain the Upright SIFT descriptions for an image $I$ and the same image rotated $k\cdot\ang{90}$, we only need to compute the descriptions for the original image. We can obtain the rotated ones by multiplying the descriptions by $P_\text{UPSIFT}^k$.
    That is, we can steer the Upright SIFT descriptions with $P_\text{UPSIFT}$.
\end{exmp}

It is known from representation theory~\cite{serreLinearRepresentationsFinite1977} what all possible representations of $C_4$ are,
and hence what all possible steerers for rotation equivariant descriptors are.
As this result will be necessary for the remainder of the text,
we collect it in a theorem.
Similar results are also known for other groups \eg the continuous rotation group $\mathrm{SO}(2)$,
which we discuss in the next section.

\begin{theorem}[Representations of $C_4$]
\label{thm:c4-rep}
    Let $\rho$ be a representation of $C_4$ on $\mathbb{R}^D$. Then, there exists an invertible matrix $Q$ and $j_d\in\{0, 1, 2, 3\}$ such that
    \begin{equation}\label{eq:eigdecom_generator}
        \rho(\mathbf g^k) = Q^{-1}
        \mathrm{diag}(\mathbf{i}^{kj_1}, \mathbf{i}^{kj_2}, \ldots, \mathbf{i}^{kj_D})
        Q.
    \end{equation}
\end{theorem}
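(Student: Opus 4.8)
The plan is to diagonalize the single matrix $\rho(\mathbf{g})$, using the fact that $C_4$ is cyclic and generated by $\mathbf{g}$ with $\mathbf{g}^4 = \mathrm{id}$. Since $\rho$ is a homomorphism, $A := \rho(\mathbf{g})$ satisfies $A^4 = \rho(\mathbf{g}^4) = \rho(\mathrm{id}) = I$. Hence $A$ is annihilated by the polynomial $t^4 - 1 = (t-1)(t+1)(t-\mathbf{i})(t+\mathbf{i})$, which has distinct roots. A matrix whose minimal polynomial has no repeated roots is diagonalizable (over $\mathbb{C}$), and its eigenvalues lie among $\{1, -1, \mathbf{i}, -\mathbf{i}\} = \{\mathbf{i}^0, \mathbf{i}^1, \mathbf{i}^2, \mathbf{i}^3\}$. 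So there is an invertible complex $Q$ with $QAQ^{-1} = \mathrm{diag}(\mathbf{i}^{j_1}, \ldots, \mathbf{i}^{j_D})$ for suitable $j_d \in \{0,1,2,3\}$.

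Next I would propagate this to all of $C_4$: since every element is $\mathbf{g}^k$ and $\rho(\mathbf{g}^k) = A^k$, conjugating by $Q$ gives $Q\rho(\mathbf{g}^k)Q^{-1} = (QAQ^{-1})^k = \mathrm{diag}(\mathbf{i}^{kj_1}, \ldots, \mathbf{i}^{kj_D})$, which is exactly \eqref{eq:eigdecom_generator} (after renaming $Q^{-1}$ and $Q$ to match the statement's convention). The remaining subtlety is that the theorem asserts $\rho$ is a representation on $\mathbb{R}^D$, so a priori $A$ is a real matrix and one might want $Q$ real; but the statement only requires $Q$ invertible (implicitly over $\mathbb{C}$, since the diagonal entries are complex), so the straightforward complex diagonalization suffices and there is nothing further to do. If one did want a real normal form, one would pair the conjugate eigenvalues $\mathbf{i}$ and $-\mathbf{i}$ into $2\times 2$ rotation blocks $\smallimagblock$, but that is not what is being claimed here.

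The one place to be slightly careful is justifying diagonalizability cleanly. I would invoke the standard fact (from e.g.\ \cite{serreLinearRepresentationsFinite1977}, or just linear algebra) that a linear operator on a finite-dimensional complex vector space is diagonalizable if and only if its minimal polynomial is a product of distinct linear factors; here the minimal polynomial divides $t^4 - 1$, which already splits into distinct linear factors over $\mathbb{C}$, so the minimal polynomial does too. Alternatively, one can argue representation-theoretically: $C_4$ is a finite abelian group, so over $\mathbb{C}$ every representation decomposes into a direct sum of one-dimensional representations (irreducibles of an abelian group are one-dimensional), and the one-dimensional representations of $C_4$ are precisely $\mathbf{g}^k \mapsto \mathbf{i}^{kj}$ for $j \in \{0,1,2,3\}$; assembling these into a block-diagonal (hence diagonal) matrix and absorbing the change of basis into $Q$ gives the claim. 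Either route is short; the main — though minor — obstacle is simply stating the diagonalizability criterion precisely enough that the reduction to the cyclic generator is airtight.
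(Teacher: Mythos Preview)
Your argument is correct. The paper does not actually prove this theorem; it simply states it as a known fact from representation theory, citing \cite{serreLinearRepresentationsFinite1977}, and then records the consequence about real block-diagonalization immediately afterward. Your proof via the minimal polynomial of $\rho(\mathbf{g})$ dividing $t^4-1$ (hence having distinct linear factors over $\mathbb{C}$, forcing diagonalizability with eigenvalues among the fourth roots of unity) is the standard elementary route and fills in precisely what the paper omits; the alternative representation-theoretic justification you sketch is equally valid and is essentially what the cited reference would give.
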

The diagonal in~\eqref{eq:eigdecom_generator} contains the eigenvalues of $\rho(\mathbf{g}^k)$.
\begin{exmp}
    The Upright SIFT steerer $P_\text{UPSIFT}$ is diagonalizable with an equal amount of each eigenvalue $\pm 1$, $\pm \mathbf{i}$.
\end{exmp}

The complex eigenvalues must appear in conjugate pairs as we take $\rho(\mathbf g)$ to be real-valued.
It is then possible to do a change of basis so that each pair $\mathbf{i}, -\mathbf{i}$ on the diagonal in \eqref{eq:eigdecom_generator}
is replaced by a block $\smallimagblock$.
In this way, $\rho(\mathbf g)$ can always be block-diagonalized: $\rho(\mathbf g) = Q^{-1} B Q$ where
$Q$ and $B$ are real valued and $B$ is block-diagonal with blocks of sizes~1 and~2.

\subsection{Representation theory of \texorpdfstring{$\mathrm{SO}(2)$}{SO(2)}}
$\mathrm{SO}(2)$ is a one-parameter Lie group, \ie a continuous group with one degree of freedom $\alpha$---the rotation angle.
A $D$-dimensional representation of $\mathrm{SO}(2)$ is a map $\varsigma: [0, 2\pi)\to \mathrm{GL}(\mathbb{R}, D)$ such that addition modulo $2\pi$ on the input is encoded as matrix multiplication on the output---we will consistently use $\varsigma$ for $\mathrm{SO}(2)$ representations to separate them from $C_4$ representations $\rho$ ($\rho$ will also be used for representations of general groups).
The most familiar is the two-dimensional representation $\varsigma(\alpha)=\begin{psmallmatrix}
    \cos(\alpha) & -\sin(\alpha) \\ \sin(\alpha) & \cos(\alpha)
\end{psmallmatrix}$
which rotates 2D coordinates.
Similar to the $C_4$ case in Theorem~\ref{thm:c4-rep}, we can write down a general representation for $\mathrm{SO}(2)$ as follows~\cite{Woit2017}.
\begin{theorem}[Representations of $\mathrm{SO}(2)$] Let $\varsigma$ be a representation of $\mathrm{SO}(2)$ on $\mathbb{R}^D$.
Then there exists an invertible $Q$ and $j_d\in\mathbb{Z}$ such that
\begin{equation}\label{eq:so2-decom}
    \varsigma(\alpha) = Q^{-1} \mathrm{diag}\left(\mathrm{e}^{\mathbf{i} j_1 \alpha}, \mathrm{e}^{\mathbf{i} j_2 \alpha}, \ldots, \mathrm{e}^{\mathbf{i} j_D \alpha}\right) Q.
\end{equation}
\end{theorem}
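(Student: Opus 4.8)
The plan is to reduce the statement to linear algebra on the Lie algebra of $\mathrm{SO}(2)$. Throughout I assume, as is implicit in the statement, that $\varsigma$ is continuous; the automatic-smoothness theorem for continuous homomorphisms between Lie groups then makes $\varsigma$ smooth, so it is a genuine Lie group homomorphism from the connected one-dimensional group $\mathrm{SO}(2)$ into $\mathrm{GL}(\mathbb{R}, D)$. First I would differentiate at the identity and set $A := \frac{\mathrm{d}}{\mathrm{d}\alpha}\big|_{\alpha=0}\varsigma(\alpha)$, a real $D\times D$ matrix. Since the Lie algebra of $\mathrm{SO}(2)$ is the one-dimensional space spanned by the generator $\smallimagblock$ and $\mathrm{SO}(2)$ is connected (so it coincides with the image of its exponential map), the homomorphism property forces $\varsigma(\alpha) = \exp(\alpha A)$ for every $\alpha$, the one-parameter subgroup with infinitesimal generator $A$. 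Thus the whole representation is encoded in the single matrix $A$.

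Next I would extract the only constraint on $A$: periodicity. Because $\varsigma(2\pi) = \varsigma(0) = I$, we need $\exp(2\pi A) = I$. Passing to $\mathbb{C}$, write the additive Jordan decomposition $A = S + N$ with $S$ diagonalizable, $N$ nilpotent and $SN = NS$. Then $\exp(2\pi A) = \exp(2\pi S)\exp(2\pi N)$ is a product of a commuting semisimple and unipotent matrix, so by uniqueness of the multiplicative Jordan decomposition of $I$ we get $\exp(2\pi S) = I$ and $\exp(2\pi N) = I$. A nonzero nilpotent $N$ has $\exp(2\pi N) = I + 2\pi N + \cdots \neq I$, hence $N = 0$ and $A = S$ is diagonalizable over $\mathbb{C}$. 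Moreover every eigenvalue $\lambda$ of $A$ satisfies $\mathrm{e}^{2\pi\lambda} = 1$, i.e.\ $\lambda = \mathbf{i} j$ with $j \in \mathbb{Z}$. Hence there is an invertible (complex) $Q$ and integers $j_1, \ldots, j_D$ with $A = Q^{-1}\,\mathrm{diag}(\mathbf{i} j_1, \ldots, \mathbf{i} j_D)\,Q$.

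Finally I would exponentiate: $\varsigma(\alpha) = \exp(\alpha A) = Q^{-1}\exp\big(\alpha\,\mathrm{diag}(\mathbf{i} j_1, \ldots, \mathbf{i} j_D)\big)Q = Q^{-1}\,\mathrm{diag}(\mathrm{e}^{\mathbf{i} j_1\alpha}, \ldots, \mathrm{e}^{\mathbf{i} j_D\alpha})\,Q$, which is exactly \eqref{eq:so2-decom}. As in the remark following Theorem~\ref{thm:c4-rep}, since $\varsigma$ is real-valued the nonzero $j_d$ come in $\pm$ pairs, and a further real change of basis turns each such pair into a $2\times 2$ rotation block $\left(\begin{smallmatrix}\cos(j\alpha) & -\sin(j\alpha)\\ \sin(j\alpha) & \cos(j\alpha)\end{smallmatrix}\right)$ while the $j_d = 0$ coordinates give trivial $1\times 1$ blocks; this recovers the familiar list of $\mathrm{SO}(2)$ irreducibles.

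The main obstacle is analytic rather than algebraic: the conclusion genuinely fails for arbitrary (e.g.\ non-measurable) homomorphisms $\mathrm{SO}(2) \to \mathrm{GL}(\mathbb{R}, D)$ built from a Hamel basis, so the argument has to lean on the continuity hypothesis together with the two Lie-theoretic inputs (automatic smoothness, and that $\exp$ generates the connected group). An alternative that avoids the Lie algebra is the compact-group route: average any inner product over the Haar measure of $\mathrm{SO}(2)$ to conjugate $\varsigma$ into unitary matrices (the unitarian trick), observe that they commute since $\mathrm{SO}(2)$ is abelian, simultaneously diagonalize them over $\mathbb{C}$, and classify the resulting one-dimensional continuous characters as $\alpha \mapsto \mathrm{e}^{\mathbf{i} j\alpha}$, $j \in \mathbb{Z}$; I would be content to present either version, the Lie-algebra one being the shortest. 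One could also try to bootstrap from Theorem~\ref{thm:c4-rep} by restricting $\varsigma$ to the cyclic subgroups $C_n$ for all $n$ and using commutativity to pin down the eigenvalues, but making the limit $n \to \infty$ rigorous costs more than the two arguments above.
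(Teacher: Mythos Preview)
Your argument is correct and is one of the standard proofs of this classical fact: differentiate to get the Lie algebra generator $A$, use periodicity $\exp(2\pi A)=I$ to force $A$ diagonalizable with eigenvalues in $\mathbf{i}\mathbb{Z}$, then exponentiate. The alternative compact-group route you sketch (average to a unitary representation, simultaneously diagonalize the commuting family, classify the continuous characters) is equally valid and is in fact the argument closer to the reference the paper cites.

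However, there is nothing to compare against: the paper does not prove this theorem. It is stated with a citation to \cite{Woit2017} as a known result from representation theory, in the same way that Theorem~\ref{thm:c4-rep} is stated with a citation to \cite{serreLinearRepresentationsFinite1977}. So your proposal is not an alternative to the paper's proof but a self-contained proof where the paper has none; either of your two routes would be an acceptable way to fill that gap.
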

The $j_d$'s are the frequencies of the eigenspaces of $\varsigma$.
Complex eigenvalues appear in conjugate pairs
so \eqref{eq:so2-decom} can be rewritten as a block diagonal decomposition
$\varsigma(\alpha) = Q^{-1} B Q$ where $Q$ and $B$ are real valued and $B$ is block-diagonal with minimal blocks.
The admissible blocks ($=$ real-valued irreducible representations) in $B$ are then the $1\times 1$ block $\begin{pmatrix} 1 \end{pmatrix}$ and the $2\times 2$ blocks
\begin{equation}\label{eq:so2_blocks}
\begin{pmatrix}
    \cos(j\alpha) & -\sin(j\alpha) \\ \sin(j\alpha) & \cos(j\alpha)
\end{pmatrix} \quad \text{for $j\in\mathbb{Z}\setminus\{0\}$.}
\end{equation}
We can write 
$\varsigma(\alpha) = \expm\left(\alpha Q^{-1} \mathrm{diag}(\mathbf{i}j_1,\ldots,\mathbf{i}j_D) Q\right)$ where $\expm$ is the matrix exponential.
The quantity $\mathrm{d}\varsigma := Q^{-1} \mathrm{diag}(\mathbf{i}j_1,\ldots,\mathbf{i}j_D) Q$ is called a Lie algebra representation of $\mathrm{SO}(2)$, here in its most general form.
When training a steerer for $\mathrm{SO}(2)$ it is practical to train a $D\times D$ matrix $\mathrm{d}\varsigma$ and steer using $\varsigma(\alpha)=\expm(\alpha \mathrm{d}\varsigma)$.

\subsection{Disentangling description space}
When we have a steerer, we get a description space on which rotations act---up to a change of basis---by a block-diagonal matrix $\oplus_{j=1}^J B_j$. 
The description space can then be thought of as being disentangled into different subspaces
where rotations act in different ways $B_j$~\cite{cohen14irreps, Worrall_2017_ICCV}.
We detail what this means
for keypoint descriptors in Appendix~\ref{app:theory}.

\section{Descriptors and steerers}
\label{sec:desc_and_steer}
This work's crucial observation and assumption is that learned descriptors, while not invariant, are approximately equivariant so that they have a steerer.
Or, as a weaker assumption, they can be trained to be equivariant. It may seem that this is a strong assumption. However, a seemingly less strong assumption turns out to be equivalent.

\begin{restatable}{theorem}{thmgupta}[Adapted from \citet{shakerinava2022structuring}, \citet{gupta2023structuring}]
\label{thm:gupta}
    Assume that we have a function $f: V \to \mathbb{S}^{D-1}$ and a group $G$ with 
    representation $\rho_\text{in}$ on $V$ such that, for all $v,v'\in V$ and $g\in G$
    \begin{equation}\label{eq:gupta}
        \langle f(\rho_{\text{in}}(g)v), f(\rho_{\text{in}}(g)v')\rangle = \langle f(v), f(v')\rangle.
    \end{equation}
    Then there exists an orthogonal representation $\rho(g)$, %
    such that $f$ is equivariant w.r.t. $G$ with representations $\rho_\text{in}$ and $\rho$.
\end{restatable}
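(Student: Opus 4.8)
The plan is to build $\rho(g)$ explicitly from $f$ and the given data. The key point is that the inner‑product condition \eqref{eq:gupta} lets us transport the action of $\rho_{\text{in}}(g)$ on $V$ over to the image set $f(V)\subseteq\mathbb{S}^{D-1}$ in a way that is both well defined and norm‑preserving, hence extends to an orthogonal map on all of $\mathbb{R}^D$.

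First I would define, for each $g\in G$, a map $T_g$ on $f(V)$ by $T_g\big(f(v)\big) := f(\rho_{\text{in}}(g)v)$, and check it is well defined. Expanding $\|f(\rho_{\text{in}}(g)v) - f(\rho_{\text{in}}(g)v')\|^2$ into inner products and applying \eqref{eq:gupta} term by term gives $\|f(v) - f(v')\|^2$; so $f(v)=f(v')$ forces $f(\rho_{\text{in}}(g)v)=f(\rho_{\text{in}}(g)v')$, and $T_g$ makes sense. Next I extend $T_g$ by linearity to $U:=\operatorname{span} f(V)\subseteq\mathbb{R}^D$, setting $\rho(g)\big(\sum_i a_i f(v_i)\big) := \sum_i a_i f(\rho_{\text{in}}(g)v_i)$. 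Independence of the chosen representative follows from the same computation: if $\sum_i a_i f(v_i)=0$, then $\big\|\sum_i a_i f(\rho_{\text{in}}(g)v_i)\big\|^2 = \sum_{i,j} a_i a_j \langle f(v_i),f(v_j)\rangle = 0$. That identity also shows $\rho(g)$ is an isometry of $U$; since it sends the spanning set $f(V)$ into $f(V)\subseteq U$ and is injective on the finite‑dimensional space $U$, it is an orthogonal automorphism of $U$. I then extend $\rho(g)$ to $\mathbb{R}^D$ by letting it act as the identity on $U^\perp$, so $\rho(g)\in\mathrm{O}(D)$.

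Finally I would verify the homomorphism property and equivariance. On the spanning set, $\rho(gg')f(v) = f(\rho_{\text{in}}(gg')v) = f(\rho_{\text{in}}(g)\rho_{\text{in}}(g')v) = \rho(g)f(\rho_{\text{in}}(g')v) = \rho(g)\rho(g')f(v)$, using that $\rho_{\text{in}}$ is a representation; by linearity both sides agree on all of $U$, and both restrict to the identity on $U^\perp$, so $\rho(gg')=\rho(g)\rho(g')$ on $\mathbb{R}^D$ (and $\rho(\mathrm{id})=I$). Equivariance $\rho(g)f(v)=f(\rho_{\text{in}}(g)v)$ holds by the very definition of $\rho(g)$.

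\textbf{Main obstacle.} The only nontrivial step is showing the linear extension to $U$ is well defined, i.e.\ that every linear relation among the $f(v_i)$ survives after applying $f\circ\rho_{\text{in}}(g)$; this is precisely where hypothesis \eqref{eq:gupta} is used, through the squared‑norm expansion above. Once that is established, the remaining points—being an isometry, surjecting onto $U$, the trivial extension to $U^\perp$, and the homomorphism check—are routine.
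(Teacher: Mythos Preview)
Your proposal is correct and follows essentially the same route as the paper: define the map on $f(V)$ via $f(v)\mapsto f(\rho_{\text{in}}(g)v)$, extend linearly to $U=\operatorname{span} f(V)$, act as the identity on $U^\perp$, and then verify isometry and the homomorphism property using \eqref{eq:gupta}. The only cosmetic difference is that the paper first fixes a basis $w_i=f(v_i)$ of $U$ and afterwards checks that the extension agrees with $\tilde\varphi_g$ on all of $f(V)$, whereas you handle well-definedness of the linear extension directly via the squared-norm expansion; both arguments are equivalent.
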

We provide a proof in Appendix~\ref{app:theory}.
Since we match normalized keypoint descriptions by their cosine similarity, Theorem~\ref{thm:gupta} is highly applicable to the image matching problem.
If a keypoint descriptor $f$ is perfectly consistent in the matching scores when \emph{simultaneously} rotating the images, then the scalar products in \eqref{eq:gupta} will be equal
so that the theorem tells us that $f$ has a steerer $\rho$.
Furthermore, we can expect many local image features
to appear in all orientations even over a dataset of upright images, thus encouraging \eqref{eq:gupta} to hold for $f$ trained on large datasets.

\subsection{Three settings for investigating steerers} \label{sec:settings}
As $C_4$ is cyclic, all its representations are defined by $\rho(\mathbf{g})$,
where $\mathbf{g}$ is the generator of $C_4$.
To find a steerer for a keypoint descriptor under $C_4$ hence comes down to finding a single matrix $\rho(\mathbf{g})$ that represents rotations by $\ang{90}$ in the description space.
Similarly, for $\mathrm{SO}(2)$ we find the single
matrix $\mathrm{d}\varsigma$ that defines the representation $\varsigma$.

We will consider three settings. In each case we will optimize $\rho(\mathbf{g})$ and/or $f$ over
the MegaDepth training set \cite{li2018megadepth} with rotation augmentation and maximize
\begin{equation} \label{eq:steer_objective}
p\left(f(P_{90}^{k_1}I_1, R_{90}^{k_1}x_1), \rho(\mathbf{g})^k f(P_{90}^{k_2}I_2, R_{90}^{k_2}x_2)\right)
\end{equation}
where $p$ is the matching probability \eqref{eq:match_prob}.
The number of rotations $k_1$ and $k_2$ for each image are sampled independently during training, and $k=k_1 - k_2 \mod{4}$ is the number of rotations that aligns image $I_2$ to image $I_1$.
Thus, $\rho(\mathbf{g})^k$ aligns the relative rotation between descriptions in $I_2$ and $I_1$.
We optimize continuous steerers $\varsigma$ %
analogously to \eqref{eq:steer_objective}.

\begin{enumerate}[font=\bfseries, wide, labelwidth=!, labelindent=0pt]
\item[Setting A: Fixed descriptor, optimized steerer.] If a descriptor works equally well for upright images as well as images rotated the same amount from upright, then according to Theorem~\ref{thm:gupta}, we should expect that there exists a steerer $\rho(\mathbf{g})$ such that \eqref{eq:equiv_desc} holds.
To find $\rho(\mathbf{g})$ we optimize it as a single $D\times D$ linear layer by maximizing~\eqref{eq:steer_objective}.

\item[Setting B: Joint optimization of descriptor and steerer. ]
The aim is to find a steerer that is as good as possible for the given data.
We will see in the experiments, by looking at the evolution of the eigenvalues of $\rho(\mathbf{g})$ during training, that this joint optimization has many local optima and is highly dependent on the initialization of $\rho(\mathbf{g})$.
However, looking at the eigenvalues of $\rho(\mathbf{g})$ does give knowledge about which descriptor dimensions are most important, as will be explained in Section~\ref{sec:training_dynamics}.

\item[Setting C: Fixed steerer, optimized descriptor.]
To get the most precise control over the rotation behaviour of a descriptor, we can
fix the steerer and optimize only the descriptor.
This enables us to investigate how much influence the choice of steerer has on the descriptor.
For instance, choosing the steerer as the identity leads to a rotation invariant descriptor.
We will see in the experiments that this choice leads to suboptimal
performance on upright images compared to other steerers.
\end{enumerate}

\begin{table}
 \small
     \centering
     \caption{\textbf{Evaluation on Roto-360~\cite{lee2023learning}. }
     We report the percentage of correct matches at three thresholds. %
     We use the DeDoDe-SO2 detector with $5,000$ keypoints in the last two rows. 
     Matching strategies are Max Matches for the $C_4$-descriptor and Max Sim. over $C_8$ for the $\mathrm{SO}(2)$-descriptor.
     See Section~\ref{sec:models_considered} for the shorthands for our models.
     }
     \begin{tabular}{
        ll
        rrr
     }

     \toprule
      Detector & Descriptor & 3px & 5px & 10px 
      \\
    
    \midrule
    SIFT \cite{lowe2004distinctive} & SIFT \cite{lowe2004distinctive} & 78 & 78 & 79 \\
    ORB \cite{rublee2011orb} & ORB \cite{rublee2011orb} & 79 & 85 & 87 \\
         SuperPoint \cite{detone2018superpoint} & RELF, single \cite{lee2023learning} &    
            90 & 91 & 93 
    \\
         SuperPoint \cite{detone2018superpoint} & RELF, multiple \cite{lee2023learning} &    
            92 & 93 & 94
    \\
        SuperPoint \cite{detone2018superpoint} & C4-B (ours) &  
            82 & 82 & 83
    \\
       SuperPoint \cite{detone2018superpoint} & SO2-Spread-B (ours) &
       \textbf{96} & \textbf{97} & 97
    \\
    \midrule
    DeDoDe \cite{edstedt2024dedode} & C4-B (ours) &  
            82 & 84 & 86
    \\
DeDoDe \cite{edstedt2024dedode}        & SO2-Spread-B (ours) &
       95 & \textbf{97} & \textbf{98}
    \\
     \bottomrule
     \end{tabular}
     \label{tab:roto360}
\end{table}

\subsection{Matching with equivariant descriptions}
\label{sec:matching_strat}
This section presents several approaches to rotation invariant matching using equivariant descriptors.
Throughout, we will denote the $D$-dimensional descriptions of $N$ keypoints in two images $I_1,I_2$ by
$y_1,y_2\in\mathbb{R}^{D\times N}$ and will assume
that we know the $C_4$-steerer $\rho(\mathbf{g})$ that rotates descriptions $\ang{90}$ or the $\mathrm{SO}(2)$-steerer $\varsigma(\alpha)$ through the
Lie algebra generator $\mathrm{d}\varsigma$.
For matching we follow DeDoDe~\cite{edstedt2024dedode},
as described in Section~\ref{sec:intro:matching}.
The base similarity used is the cosine similarity, so we compute
$y_1^T y_2$ for normalized descriptions to get an $N\times N$ matrix of pairwise scores on
which dual softmax \eqref{eq:match_prob} is applied. 
Matches are mutual most similar descriptions with similarity above $0.01$.

\begin{enumerate}[font=\bfseries, wide, labelwidth=!,labelindent=0pt]
\item[Max matches over steered descriptions.]
The first way of obtaining 
invariant matches is to match $y_1$ with $\rho(\mathbf{g})^k y_2$ for $k=0,1,2,3$
and keep the matches from the $k$ that has the most matches.
This is similar to matching the image $I_1$ with four different
rotations of $I_2$ but alleviates the need for rerunning the descriptor network
for each rotation. 

\item[Max similarity over steered descriptions.]
A computationally cheaper version is to select the matching matrix not as $y_1^Ty_2$ but as $\max_{k} y_1^T\rho(\mathbf{g})^k y_2$,
where the $\max$ is elementwise over the matrix.

\item[$\mathbf{SO(2)}$-steerers. ] To apply the above matching strategies to $\mathrm{SO}(2)$-steerers $\varsigma(\alpha)=\expm(\alpha\mathrm{d}\varsigma)$ we discretize $\varsigma$.
A $C_\ell$-steerer is obtained through $\rho(\mathbf{g}_\ell)=\expm\left(\frac{2\pi}{\ell}\mathrm{d}\varsigma\right)$, where $\mathbf{g}_\ell$ generates $C_\ell$.
We will use $C_8$ in the experiments.

\label{sec:speed1}
\item[Procrustes matcher. ]
If all eigenvalues of $\rho(\mathbf{g})$ are $\pm\mathbf{i}$, the steerer can be block-diagonalized with only the block $\smallimagblock$\footnote{This also holds for $\mathrm{SO}(2)$ steerers, referring to eigenvalues and blocks of the Lie algebra generator $\mathrm{d}\varsigma$.}.
The descriptions consist of $D/2$ two-dimensional quantities that all
rotate with the same frequency as the image.
We will refer to them as frequency 1 descriptions
and view them reshaped as 
$y\in\mathbb{R}^{2\times (D/2) \times N}$.
A 2D rotation matrix acts on these descriptions from the left when the image rotates, and
we can find the optimal rotation matrix $R_{m,n}$ that aligns each pair $y_{1,m},y_{2,n}\in\mathbb{R}^{2\times (D/2)}$ by solving the Procrustes problem via SVD.
The matching matrix is obtained by computing $\langle R_{m, n} y_{1,m},y_{2,n}\rangle$ for each pair.
$R_{m, n}$ gives the relative rotation between each pair of keypoints,
which can be useful 
\eg for minimal relative pose solvers
\cite{barath2020making, barath2022relative} or for outlier filtering \cite{cavalli2020adalam}.
We leave exploring this per-correspondence geometry to future work.

\end{enumerate}

\begin{table}
 \small
     \centering
     \caption{\textbf{Evaluation on AIMS \cite{stoken2023astronaut}}. 
         We report the average precision (AP) in percent on different splits of AIMS: ``North Up'' (N. Up) contains images with small rotations, ``All Others'' (A. O.) contains images with larger rotations and ``All'' contains all images.
         We use the DeDoDe-SO2 detector and $10,000$ keypoints throughout.
          See Section~\ref{sec:models_considered} for the shorthands for our models.
     }
     \begin{tabular}{
        l
        rrr
     }
     \toprule
       Method & 
      N. Up & A. O. & All
      \\
    \midrule
    SE2-LoFTR \cite{bokman2022case} & 58 & 51 & 52 \\
    C4-B, Max Matches (ours) & 52 & 51 & 51 \\
    SO2-Spread-B, Max Sim. C8 (ours) & 60 & 57 & 58 \\
    SO2-Freq1-B, Procrustes (ours) & \textbf{64} & \textbf{59} & \textbf{60}\\
     \bottomrule
     \end{tabular}
     \label{tab:aims}
\end{table}

\begin{table*}
 \small
     \centering
     \caption{\textbf{Evaluation on MegaDepth~\cite{li2018megadepth,sun2021loftr}}. 
     The first section shows Setting~A where we only optimize the steerer, the second section shows Setting~B where we jointly optimize the descriptor and steerer and the third section shows the Setting~C where we predefine the steerer and optimize only the descriptor.
     For MegaDepth-1500 we always use dual softmax matcher to evaluate the descriptors on upright images.
     We use $20,000$ keypoints throughout. The best values for \textcolor{Blue}{\textbf{B}}- and \textcolor{OliveGreen}{\textbf{G}}-models are highlighed in each column.
      See Section~\ref{sec:models_considered} for shorthand explanations for our models.
     A larger version of this table with more methods is available in Appendix~\ref{app:more_experiments}.
     }
     \begin{tabular}{
        lll
        rrr 
        rrr
        rrr
     }
     \toprule
      Detector & Descriptor & & \multicolumn{3}{l}{\phantom{AUC @ }MegaDepth-1500} &  \multicolumn{3}{l}{MegaDepth-C4} &  \multicolumn{3}{l}{MegaDepth-SO2} \\ 
      DeDoDe & DeDoDe & Matching strategy & AUC $@$
      ~$5^{\circ}$&$10^{\circ}$&$20^{\circ}$ & 
      ~$5^{\circ}$&$10^{\circ}$&$20^{\circ}$ & 
      ~$5^{\circ}$&$10^{\circ}$&$20^{\circ}$\\
    \midrule
         Original & B & Dual softmax & 
            49 & 65 & 77 &
            12 & 17 & 20 &
            12 & 16 & 20
        \\
         Original & B & Max matches C4-steered & 
            \dittotikz & \dittotikz & \dittotikz & 
            43 & 60 & 73 &
            30 & 44 & 56 
        \\
        SO2 & B & Max matches C8-steered & 
            50 & 66 & 78 &
            40 & 57 & 70 &
            34 & 51 & 65
        \\
         Original & G & Dual softmax & 
            \textbf{\textcolor{OliveGreen}{52}} & \textbf{\textcolor{OliveGreen}{69}} & \textbf{\textcolor{OliveGreen}{81}} &
            13 & 17 & 21 &
            16 & 22 & 28
        \\
         Original & G & Max matches C4-steered & 
            \dittotikz & \dittotikz & \dittotikz & 
            31 & 45 & 57 &
            26 & 39 & 50 
        \\
    \midrule
        C4 & C4-B & Max matches C4-steered &
            \textbf{\textcolor{Blue}{51}} & \textbf{\textcolor{Blue}{67}} & \textbf{\textcolor{Blue}{79}} &
            \textbf{\textcolor{Blue}{50}} & \textbf{\textcolor{Blue}{67}} & \textbf{\textcolor{Blue}{79}} &
            39 & 55 & 68 
        \\
        SO2 & SO2-B & Max matches C8-steered &
            47 & 63 & 76 &
            47 & 63 & 76 &
            44 & 61 & 74 
        \\
        SO2 & SO2-Spread-B & Max matches C8-steered &
            50 & 66 & \textbf{\textcolor{Blue}{79}} &
            49 & 66 & 78 &
            \textbf{\textcolor{Blue}{46}} & \textbf{\textcolor{Blue}{63}} & \textbf{\textcolor{Blue}{76}} 
        \\
        SO2 & SO2-Spread-B & Max similarity C8-steered &
            49 & 66 & 78 &
            47 & 64 & 77 &
            43 & 61 & 74 
        \\
    \midrule
        C4 & C4-Inv-B & Dual softmax &
            48 & 64 & 76 &
            47 & 63 & 76 &
            39 & 55 & 69 
        \\
        C4 & C4-Perm-B & Max matches C4-steered &
            50 & \textbf{\textcolor{Blue}{67}} & \textbf{\textcolor{Blue}{79}} &
            \textbf{\textcolor{Blue}{50}} & 66 & \textbf{\textcolor{Blue}{79}} &
            39 & 54 & 67 
        \\
        SO2 & SO2-Inv-B & Dual softmax &
            46 & 62 & 75 &
            45 & 61 & 74 &
            43 & 60 & 73 
        \\
        SO2 & SO2-Freq1-B & Max matches C8-steered &
            47 & 64 & 77 &
            47 & 64 & 76 &
            45 & 62 & 75 
        \\
        SO2 & SO2-Freq1-B & Procrustes &
            47 & 64 & 76 &
            46 & 62 & 75 &
            45 & 61 & 74 
        \\
        C4 & C4-Perm-G & Max matches C4-steered &
            \textbf{\textcolor{OliveGreen}{52}} & \textbf{\textcolor{OliveGreen}{69}} & \textbf{\textcolor{OliveGreen}{81}} &
            \textbf{\textcolor{OliveGreen}{53}} & \textbf{\textcolor{OliveGreen}{69}} & \textbf{\textcolor{OliveGreen}{82}} &
            \textbf{\textcolor{OliveGreen}{44}} & \textbf{\textcolor{OliveGreen}{61}} & \textbf{\textcolor{OliveGreen}{74}} 
        \\
     \bottomrule
     \end{tabular}
     \label{tab:megadepth}
\end{table*}

\section{Experiments}
\label{sec:experiments}

We train and evaluate a variety of descriptors and steerers.
Experimental details are covered in Appendix~\ref{app:experiment_details}.
We provide comparisons to TTA in performance and runtime in Appendix~\ref{app:more_experiments}, as well as experiments with more matching strategies and an explicit experiment to test the connection between Theorem~\ref{thm:gupta} and rotation equivariance.

We start by reporting results on two public benchmarks for rotation invariant image matching.
Then, we will present ablation results for the MegaDepth benchmark, both for the standard version with upright images and a version where we have rotated the input images.

\subsection{Models considered }
\label{sec:models_considered}
Our base models are the DeDoDe-B
and DeDoDe-G descriptors introduced in \cite{edstedt2024dedode}.
These are both $D=256$ dimensional descriptors.
The focus will be on the smaller model DeDoDe-B, as this gives us the chance
to do large-scale ablations.
We train all models on MegaDepth~\cite{li2018megadepth}.
To obtain rotation-consistent detections, we retrain two versions of the DeDoDe-detector, with data augmentation over $C_4$ and $\mathrm{SO}(2)$ respectively, denoted DeDoDe-\{C4, SO2\}.

For Setting~B (Section~\ref{sec:settings}), we will see that the initialization of the steerer matters.
Similarly, for Setting~C, we can fix the steerer with different eigenvalue structures.
Here, we introduce shorthand, which is used in the result tables.
We will refer to the case of all eigenvalues 1 as \textit{Inv} for invariant.
This case corresponds to the ordinary notion of data augmentation, where
the descriptions for rotated images should be the same as for non-rotated images.
The case when all eigenvalues are $\pm\mathbf{i}$ is denoted \textit{Freq1} for frequency 1 as explained
in Section~\ref{sec:speed1}.
For $C_4$-steerers, the case with an equal distribution of all eigenvalues
$\pm 1, \pm\mathbf{i}$ will be denoted \textit{Perm},
as this is the eigenvalue signature of a cyclic permutation of order 4.
For $\mathrm{SO}(2)$-steerers, the case with an equal distribution of eigenvalues
$0, \pm\mathbf{i}, \pm2\mathbf{i}, \ldots \pm6\mathbf{i}$ will be denoted \textit{Spread} (the cutoff $6$ was arbitrarily chosen).
The \textit{Perm} and \textit{Spread} steerers correspond to a broad range of frequencies in the description space.
When none of the above labels (\emph{Inv}, \emph{Freq1}, \emph{Perm} or \emph{Spread}) is attached to a descriptor and steerer trained jointly 
in Setting~B, then we initialize the steerer with values uniformly in $(-D^{-1/2}, D^{-1/2})$\footnote{This is the standard initialization of a linear layer in Pytorch~\cite{paszke2019pytorch, he2015delving}.}, the eigenvalues are then approximately uniformly distributed in the disk with radius $3^{-1/2}$ \cite{tao2010random}.
When a descriptor is trained with $k\cdot\ang{90}$ rotations, we append \textit{C4} to its name and when trained
with continuous augmentations, we append \textit{SO2} to its name.

\subsection{Roto-360}\label{sec:roto-360}
We evaluate on the Roto-360 benchmark~\cite{lee2023learning}, which consists of ten image pairs from HPatches \cite{balntas2017hpatches} where the second image in each image pair is rotated by all multiplies of $\ang{10}$ to obtain 360 image pairs in total. We report the average precision of the obtained matches and compare it to the current state-of-the-art RELF~\cite{lee2023learning}.
The results are shown in Table~\ref{tab:roto360}.
We see that we outperform RELF when using
methods trained for continuous rotations.
Our matching runs around three times faster than RELF on Roto-360.

\subsection{AIMS}\label{sec:aims}
The Astronaut Image Matching Subset (AIMS) \cite{stoken2023astronaut} consists of
images taken by astronauts from the ISS and satellite images covering the broad regions that the astronaut images could depict.
The task consists of finding the pairs of astronaut images and satellite images that show the same
locations on Earth.
Pairs are found by setting a threshold for the number of matches between images after homography estimation with RANSAC.

The relative rotations of the astronaut and satellite images are unknown, making the task suitable for rotation-invariant matchers.
Indeed, in \cite{stoken2023astronaut}, the best performing method is the rotation invariant SE2-LoFTR~\cite{bokman2022case}, which we compare to.
The AIMS can be split into ``North Up'' astronaut images, consisting of images with small rotations (between $\ang{0}$ and $\ang{90}$) and ``All Others'', consisting of images with large rotations.
This split further enables the evaluation of rotation invariant matchers.
We report the average precision
over the whole dataset, as opposed to the approach in \cite{stoken2023astronaut}, where the score is computed over at most 100 true negatives per astronaut image.
Results are shown in Table~\ref{tab:aims}. Further,
we plot precision-recall curves in Appendix~\ref{app:more_experiments}.
We generally outperform SE2-LoFTR, particularly on the heavily rotated images in ``All Others''.

\subsection{MegaDepth-1500}\label{sec:megadepth}
We evaluate on a held-out part of MegaDepth (MegaDepth-1500 following \cite{sun2021loftr}).
Here, the task is to take two input images and output the relative pose between the cameras.
The performance is measured by the AUC of the pose error.
Additionally, we create two versions of MegaDepth with rotated images to
evaluate the rotational robustness of our models.
For MegaDepth-C4, the second image in every image pair is rotated $(i \mod 4)\cdot \ang{90}$ where $i$ is the index of the image pair.
We visualize a pair in MegaDepth-C4 in Appendix~\ref{app:more_experiments}, illustrating the improvement from DeDoDe-B to DeDoDe-B with a steerer optimized in Setting~A.  %
For MegaDepth-SO2, the second image in every image pair is instead rotated $(i \mod 36)\cdot \ang{10}$, thus requiring robustness under continuous rotations.

The results are presented in Table~\ref{tab:megadepth}; for more methods, see Appendix~\ref{app:more_experiments}.
We summarize the main takeaways:
\begin{enumerate}
    \item It is possible to find steerers for the original DeDoDe models (\eg the second row of the table), even though they were not trained with any rotation augmentation.
    \item The trained $C_4$ steerers perform very well as their scores on MegaDepth-1500 and MegaDepth-C4 are the same.
    \item Training DeDoDe-B jointly with a $C_4$ steerer (C4-B) or with a fixed steerer (C4-Perm-B) improves results on upright images---this can be attributed to the fact that training with a steerer enables using rotation augmentation.
    \item The right equivariance for the task at hand is crucial---$\mathrm{SO}(2)$-steerers outperform others on MegaDepth-SO2.
    \item The eigenvalue distribution of the steerer is important---invariant models are worse than others, and SO2-B and SO2-Freq1-B are worse than SO2-Spread-B.
    \item DeDoDe-G can be made equivariant (C4-Perm-G), even though it has a frozen DINOv2~\cite{oquab2023dinov2} ViT backbone.
\end{enumerate}

\begin{figure}
    \centering
    \includegraphics[width=\columnwidth]{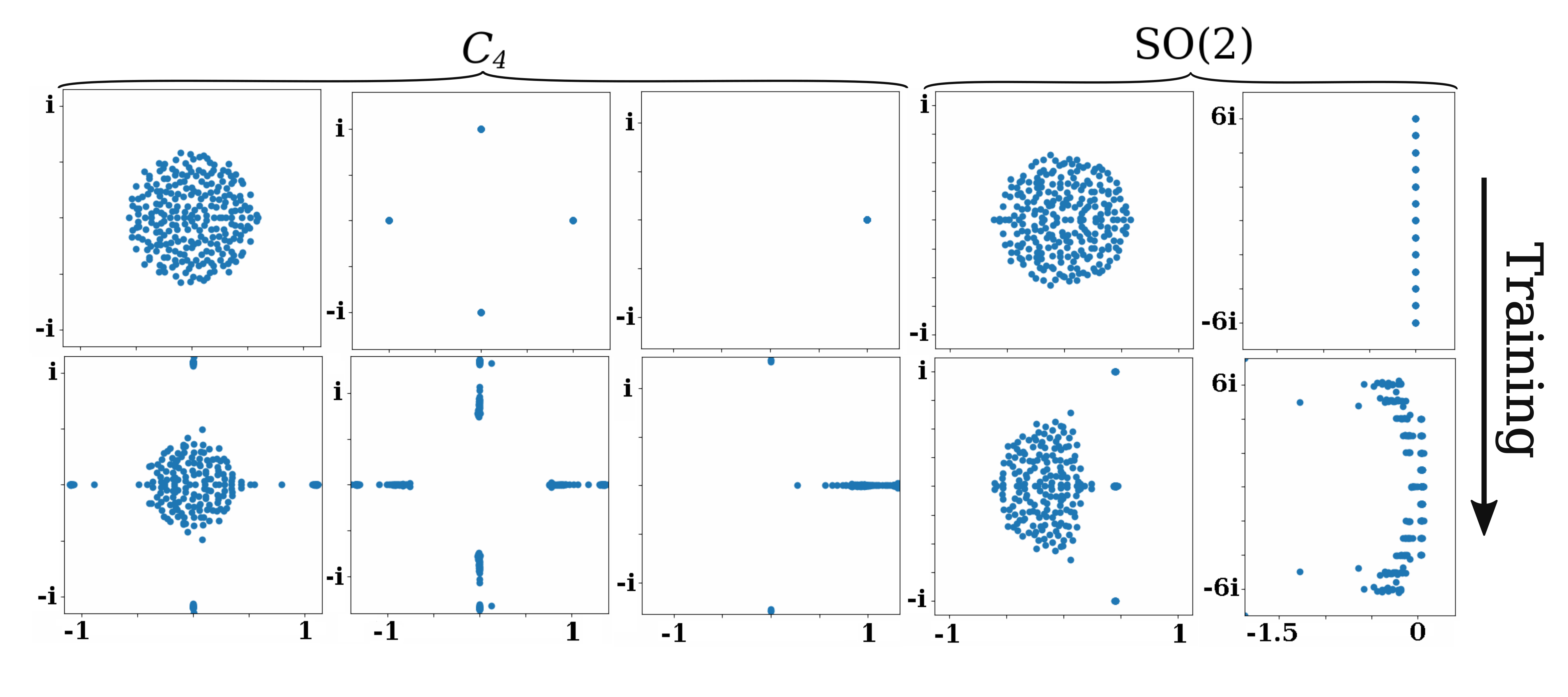}

     \caption{\textbf{Training evolution of eigenvalue distributions of steerers.}
    We plot the eigenvalue distribution of $C_4$-steerers $\rho(\mathbf{g})$ (first three columns) and Lie algebra generators $\mathrm{d}\varsigma$ for $\mathrm{SO}(2)$-steerers (last two columns) in the complex plane, with different initializations when trained jointly with a descriptor.
    The top row depicts the eigenvalues at the start, and the bottom
    row at the end of training.
    There are $D=256$ eigenvalues in every plot---many congregate at the ``admissible'' eigenvalues as described in Section~\ref{sec:equiv_steer}---but some do not, see the discussion in Section~\ref{sec:training_dynamics}.
    These visualizations highlight the initialization sensitivity of the steerer.
    We show gif movies of the training evolution at \href{https://github.com/georg-bn/rotation-steerers}{this https url}.
    }
    \label{fig:eigval}
\end{figure}

\subsection{Training dynamics of steerer eigenvalues}
\label{sec:training_dynamics}
This section aims to demonstrate that joint optimization of the steerer and descriptor does not necessarily lead to a good eigenvalue structure for the steerer.
We plot the evolution of the eigenvalues of the steerer over the training epochs in Figure~\ref{fig:eigval}.
For $C_4$-steerers we plot the eigenvalues of $\rho(\mathbf{g})$ itself,
while for $\mathrm{SO}(2)$-steerers we plot the eigenvalues $\lambda_d$ of the Lie algebra generator $\mathrm{d}\varsigma$, so that the eigenvalues of the steerer $\varsigma(\alpha)$ are $\mathrm{e}^{\alpha \lambda_d}$. 
It is clear from Figure~\ref{fig:eigval} that the initialization of the steerer
influences the final distribution of eigenvalues a lot
and we saw in Table~\ref{tab:megadepth} that the eigenvalue distribution
of the steerer matters for performance.
Thus, we think it is an important direction for future work to figure out how to get around this initialization sensitivity.
The choice of eigenvalue structure is related to the problem of specifying which group representations to use in
the layers of equivariant neural networks in general.

As a side effect of plotting the eigenvalues, we find that some of the steerer's eigenvalues have much lower absolute values than others\footnote{
The absolute values of the eigenvalues of a $\mathrm{SO}(2)$ steerer $\varsigma(\alpha)$ are $\mathrm{e}^{\alpha\mathrm{Re}(\lambda_d)}$ where $\lambda_d$ are the eigenvalues of $\mathrm{d}\varsigma$
that are plotted in the two rightmost columns of Figure~\ref{fig:eigval}.
Therefore, a lower real value of $\lambda_d$ means a lower absolute value of the eigenvalue of the steerer.
}.
The steerer is applied to descriptions before they are normalized,
so the absolute value of the maximum eigenvalue is unimportant, but
the relative size of the eigenvalues tells us something about feature importance.
Eigenvectors with small eigenvalues cannot be too important for matching,
since they will be relatively downscaled when applying the steerer in the
optimization of \eqref{eq:steer_objective}.
Indeed, small eigenvalues seem to correspond to unimportant dimensions of the descriptor---we maintain matching performance when projecting the descriptions to the span of the eigenvectors with large eigenvalues.
This is related to PCA for dimensionality reduction, which
has successfully been used for classical keypoint descriptors~\cite{ke2004pca}.

\section{Conclusion}
We developed a new framework for rotation equivariant keypoint descriptors using steerers---linear maps that encode image rotations in description space.
After outlining the general theory of steerers using %
representation theory, we designed a large set of experiments with steerers in three settings: (A) optimizing a steerer for a fixed descriptor, (B) optimizing a steerer and a descriptor jointly and (C) optimizing a descriptor for a fixed steerer.
Our best models obtained new state-of-the-art results on the rotation invariant matching benchmarks Roto-360 and AIMS.

\section*{Acknowledgements}
We thank Alex Stoken for help with the evaluation on AIMS.
This work was supported by the Wallenberg Artificial
Intelligence, Autonomous Systems and Software Program
(WASP), funded by the Knut and Alice Wallenberg Foundation and by the strategic research environment ELLIIT, funded by the Swedish government. The computational resources were provided by the
National Academic Infrastructure for Supercomputing in
Sweden (NAISS) at C3SE, partially funded by the Swedish Research
Council through grant agreement no.~2022-06725, and by
the Berzelius resource, provided by the Knut and Alice Wallenberg Foundation at the National Supercomputer Centre.

\newpage
{
    \small
    \bibliographystyle{ieeenat_fullname}
    \bibliography{main}

\begin{thebibliography}{61}
\providecommand{\natexlab}[1]{#1}
\providecommand{\url}[1]{\texttt{#1}}
\expandafter\ifx\csname urlstyle\endcsname\relax
  \providecommand{\doi}[1]{doi: #1}\else
  \providecommand{\doi}{doi: \begingroup \urlstyle{rm}\Url}\fi

\bibitem[Baatz et~al.(2010)Baatz, K{\"o}ser, Chen, Grzeszczuk, and
  Pollefeys]{baatz2010handling}
Georges Baatz, Kevin K{\"o}ser, David Chen, Radek Grzeszczuk, and Marc
  Pollefeys.
\newblock Handling urban location recognition as a 2d homothetic problem.
\newblock In \emph{Computer Vision--ECCV 2010: 11th European Conference on
  Computer Vision, Heraklion, Crete, Greece, September 5-11, 2010, Proceedings,
  Part VI 11}, pages 266--279. Springer, 2010.

\bibitem[Bagad et~al.(2022)Bagad, Eijkelboom, Fokkema, de~Goede, Hilders, and
  Kofinas]{bagad2022c}
Piyush Bagad, Floor Eijkelboom, Mark Fokkema, Danilo de Goede, Paul Hilders,
  and Miltiadis Kofinas.
\newblock C-3po: Towards rotation equivariant feature detection and
  description.
\newblock In \emph{European Conference on Computer Vision}, pages 694--705.
  Springer, 2022.

\bibitem[Balntas et~al.(2017)Balntas, Lenc, Vedaldi, and
  Mikolajczyk]{balntas2017hpatches}
Vassileios Balntas, Karel Lenc, Andrea Vedaldi, and Krystian Mikolajczyk.
\newblock {HPatches}: A benchmark and evaluation of handcrafted and learned
  local descriptors.
\newblock In \emph{Proceedings of the IEEE conference on computer vision and
  pattern recognition}, pages 5173--5182, 2017.

\bibitem[Barath and Kukelova(2022)]{barath2022relative}
Daniel Barath and Zuzana Kukelova.
\newblock Relative pose from sift features.
\newblock In \emph{European Conference on Computer Vision}, pages 454--469.
  Springer, 2022.

\bibitem[Barath et~al.(2020{\natexlab{a}})Barath, Noskova, Ivashechkin, and
  Matas]{barath2020magsac++}
Daniel Barath, Jana Noskova, Maksym Ivashechkin, and Jiri Matas.
\newblock Magsac++, a fast, reliable and accurate robust estimator.
\newblock In \emph{Proceedings of the IEEE/CVF conference on computer vision
  and pattern recognition}, pages 1304--1312, 2020{\natexlab{a}}.

\bibitem[Barath et~al.(2020{\natexlab{b}})Barath, Polic, F{\"o}rstner, Sattler,
  Pajdla, and Kukelova]{barath2020making}
Daniel Barath, Michal Polic, Wolfgang F{\"o}rstner, Torsten Sattler, Tomas
  Pajdla, and Zuzana Kukelova.
\newblock Making affine correspondences work in camera geometry computation.
\newblock In \emph{European Conference on Computer Vision}, pages 723--740.
  Springer, 2020{\natexlab{b}}.

\bibitem[Bay et~al.(2006)Bay, Tuytelaars, and Van~Gool]{bay2006surf}
Herbert Bay, Tinne Tuytelaars, and Luc Van~Gool.
\newblock Surf: Speeded up robust features.
\newblock In \emph{European conference on computer vision}, pages 404--417.
  Springer, 2006.

\bibitem[Bay et~al.(2008)Bay, Ess, Tuytelaars, and {Van Gool}]{bay2008surf}
Herbert Bay, Andreas Ess, Tinne Tuytelaars, and Luc {Van Gool}.
\newblock Speeded-up robust features (surf).
\newblock \emph{Computer Vision and Image Understanding}, 110\penalty0
  (3):\penalty0 346--359, 2008.
\newblock Similarity Matching in Computer Vision and Multimedia.

\bibitem[B{\"o}kman and Kahl(2022)]{bokman2022case}
Georg B{\"o}kman and Fredrik Kahl.
\newblock A case for using rotation invariant features in state of the art
  feature matchers.
\newblock In \emph{Proceedings of the IEEE/CVF Conference on Computer Vision
  and Pattern Recognition}, pages 5110--5119, 2022.

\bibitem[B{\"o}kman and Kahl(2023)]{bokman2023investigating}
Georg B{\"o}kman and Fredrik Kahl.
\newblock Investigating how {Re{LU}}-networks encode symmetries.
\newblock In \emph{Thirty-seventh Conference on Neural Information Processing
  Systems}, 2023.

\bibitem[Bronstein et~al.(2021)Bronstein, Bruna, Cohen, and
  Veli{\v{c}}kovi{\'c}]{bronstein2021geometric}
Michael~M Bronstein, Joan Bruna, Taco Cohen, and Petar Veli{\v{c}}kovi{\'c}.
\newblock Geometric deep learning: Grids, groups, graphs, geodesics, and
  gauges.
\newblock \emph{arXiv preprint arXiv:2104.13478}, 2021.

\bibitem[Bruintjes et~al.(2023)Bruintjes, Motyka, and van
  Gemert]{Bruintjes_2023_CVPR}
Robert-Jan Bruintjes, Tomasz Motyka, and Jan van Gemert.
\newblock What affects learned equivariance in deep image recognition models?
\newblock In \emph{Proceedings of the IEEE/CVF Conference on Computer Vision
  and Pattern Recognition (CVPR) Workshops}, pages 4838--4846, 2023.

\bibitem[Cavalli et~al.(2020)Cavalli, Larsson, Oswald, Sattler, and
  Pollefeys]{cavalli2020adalam}
Luca Cavalli, Viktor Larsson, Martin~Ralf Oswald, Torsten Sattler, and Marc
  Pollefeys.
\newblock Adalam: Revisiting handcrafted outlier detection.
\newblock \emph{arXiv preprint arXiv:2006.04250}, 2020.

\bibitem[Cohen and Welling(2014{\natexlab{a}})]{cohen14irreps}
Taco Cohen and Max Welling.
\newblock Learning the irreducible representations of commutative lie groups.
\newblock In \emph{Proceedings of the 31st International Conference on Machine
  Learning}, pages 1755--1763, Bejing, China, 2014{\natexlab{a}}. PMLR.

\bibitem[Cohen and Welling(2016)]{cohen2016group}
Taco Cohen and Max Welling.
\newblock Group equivariant convolutional networks.
\newblock In \emph{International conference on machine learning}, pages
  2990--2999. PMLR, 2016.

\bibitem[Cohen and Welling(2014{\natexlab{b}})]{cohen2014transformation}
Taco~S Cohen and Max Welling.
\newblock Transformation properties of learned visual representations.
\newblock \emph{ICLR 2015 (arXiv:1412.7659)}, 2014{\natexlab{b}}.

\bibitem[DeTone et~al.(2018)DeTone, Malisiewicz, and
  Rabinovich]{detone2018superpoint}
Daniel DeTone, Tomasz Malisiewicz, and Andrew Rabinovich.
\newblock Superpoint: Self-supervised interest point detection and description.
\newblock In \emph{Proceedings of the IEEE conference on computer vision and
  pattern recognition workshops}, pages 224--236, 2018.

\bibitem[Edixhoven et~al.(2023)Edixhoven, Lengyel, and van
  Gemert]{edixhoven2023using}
Tom Edixhoven, Attila Lengyel, and Jan~C van Gemert.
\newblock Using and abusing equivariance.
\newblock In \emph{Proceedings of the IEEE/CVF International Conference on
  Computer Vision}, pages 119--128, 2023.

\bibitem[Edstedt et~al.(2024)Edstedt, Bökman, Wadenbäck, and
  Felsberg]{edstedt2024dedode}
Johan Edstedt, Georg Bökman, Mårten Wadenbäck, and Michael Felsberg.
\newblock {DeDoDe: Detect, Don't Describe -- Describe, Don't Detect for Local
  Feature Matching}.
\newblock In \emph{2024 International Conference on 3D Vision (3DV)}. IEEE,
  2024.

\bibitem[Finzi et~al.(2021)Finzi, Welling, and Wilson]{finzi2021practical}
Marc Finzi, Max Welling, and Andrew~Gordon Wilson.
\newblock A practical method for constructing equivariant multilayer
  perceptrons for arbitrary matrix groups.
\newblock In \emph{International conference on machine learning}, pages
  3318--3328. PMLR, 2021.

\bibitem[Freeman et~al.(1991)Freeman, Adelson, et~al.]{freeman1991design}
William~T Freeman, Edward~H Adelson, et~al.
\newblock The design and use of steerable filters.
\newblock \emph{IEEE Transactions on Pattern analysis and machine
  intelligence}, 13\penalty0 (9):\penalty0 891--906, 1991.

\bibitem[Gerken et~al.()Gerken, Aronsson, Carlsson, Linander, Ohlsson,
  Petersson, and Persson]{gerkenGeometricDeepLearning2023}
Jan~E. Gerken, Jimmy Aronsson, Oscar Carlsson, Hampus Linander, Fredrik
  Ohlsson, Christoffer Petersson, and Daniel Persson.
\newblock Geometric deep learning and equivariant neural networks.
\newblock 56\penalty0 (12):\penalty0 14605--14662.

\bibitem[Gleize et~al.(2023)Gleize, Wang, and Feiszli]{gleize2023silk}
Pierre Gleize, Weiyao Wang, and Matt Feiszli.
\newblock {SiLK: Simple Learned Keypoints}.
\newblock In \emph{ICCV}, 2023.

\bibitem[Gruver et~al.(2023)Gruver, Finzi, Goldblum, and Wilson]{gruver2023the}
Nate Gruver, Marc~Anton Finzi, Micah Goldblum, and Andrew~Gordon Wilson.
\newblock The lie derivative for measuring learned equivariance.
\newblock In \emph{The Eleventh International Conference on Learning
  Representations}, 2023.

\bibitem[Gupta et~al.(2023)Gupta, Robinson, Lim, Villar, and
  Jegelka]{gupta2023structuring}
Sharut Gupta, Joshua Robinson, Derek Lim, Soledad Villar, and Stefanie Jegelka.
\newblock Structuring representation geometry with rotationally equivariant
  contrastive learning.
\newblock \emph{arXiv preprint arXiv:2306.13924}, 2023.

\bibitem[He et~al.(2015)He, Zhang, Ren, and Sun]{he2015delving}
Kaiming He, Xiangyu Zhang, Shaoqing Ren, and Jian Sun.
\newblock Delving deep into rectifiers: Surpassing human-level performance on
  imagenet classification.
\newblock In \emph{Proceedings of the IEEE international conference on computer
  vision}, pages 1026--1034, 2015.

\bibitem[Ke and Sukthankar(2004)]{ke2004pca}
Yan Ke and Rahul Sukthankar.
\newblock Pca-sift: A more distinctive representation for local image
  descriptors.
\newblock In \emph{Proceedings of the 2004 IEEE Computer Society Conference on
  Computer Vision and Pattern Recognition, 2004. CVPR 2004.}, pages II--II.
  IEEE, 2004.

\bibitem[Koyama et~al.(2023)Koyama, Fukumizu, Hayashi, and
  Miyato]{koyama2023nft}
Masanori Koyama, Kenji Fukumizu, Kohei Hayashi, and Takeru Miyato.
\newblock Neural fourier transform: A general approach to equivariant
  representation learning.
\newblock \emph{arXiv preprint arXiv:2305.18484}, 2023.

\bibitem[Lee et~al.(2021)Lee, Jeong, and Cho]{lee2021self}
Jongmin Lee, Yoonwoo Jeong, and Minsu Cho.
\newblock Self-supervised learning of image scale and orientation.
\newblock In \emph{31st British Machine Vision Conference 2021, {BMVC} 2021,
  Virtual Event, UK}. {BMVA} Press, 2021.

\bibitem[Lee et~al.(2022)Lee, Kim, and Cho]{lee2022self}
Jongmin Lee, Byungjin Kim, and Minsu Cho.
\newblock Self-supervised equivariant learning for oriented keypoint detection.
\newblock In \emph{Proceedings of the IEEE/CVF Conference on Computer Vision
  and Pattern Recognition}, pages 4847--4857, 2022.

\bibitem[Lee et~al.(2023)Lee, Kim, Kim, and Cho]{lee2023learning}
Jongmin Lee, Byungjin Kim, Seungwook Kim, and Minsu Cho.
\newblock Learning rotation-equivariant features for visual correspondence.
\newblock In \emph{Proceedings of the IEEE/CVF Conference on Computer Vision
  and Pattern Recognition}, pages 21887--21897, 2023.

\bibitem[Lenc and Vedaldi(2015)]{Lenc_2015_CVPR}
Karel Lenc and Andrea Vedaldi.
\newblock Understanding image representations by measuring their equivariance
  and equivalence.
\newblock In \emph{Proceedings of the IEEE Conference on Computer Vision and
  Pattern Recognition (CVPR)}, 2015.

\bibitem[Li and Snavely(2018)]{li2018megadepth}
Zhengqi Li and Noah Snavely.
\newblock Megadepth: Learning single-view depth prediction from internet
  photos.
\newblock In \emph{Proceedings of the IEEE Conference on Computer Vision and
  Pattern Recognition}, pages 2041--2050, 2018.

\bibitem[Liu et~al.(2019)Liu, Shen, Lin, Peng, Bao, and Zhou]{liu2019gift}
Yuan Liu, Zehong Shen, Zhixuan Lin, Sida Peng, Hujun Bao, and Xiaowei Zhou.
\newblock Gift: Learning transformation-invariant dense visual descriptors via
  group cnns.
\newblock \emph{Advances in Neural Information Processing Systems}, 32, 2019.

\bibitem[Lowe(2004)]{lowe2004distinctive}
David~G Lowe.
\newblock Distinctive image features from scale-invariant keypoints.
\newblock \emph{International journal of computer vision}, 60\penalty0
  (2):\penalty0 91--110, 2004.

\bibitem[Marchetti et~al.(2023)Marchetti, Tegn{\'e}r, Varava, and
  Kragic]{marchetti2023equivariant}
Giovanni~Luca Marchetti, Gustaf Tegn{\'e}r, Anastasiia Varava, and Danica
  Kragic.
\newblock Equivariant representation learning via class-pose decomposition.
\newblock In \emph{International Conference on Artificial Intelligence and
  Statistics}, pages 4745--4756. PMLR, 2023.

\bibitem[Mishkin et~al.(2018)Mishkin, Radenovic, and
  Matas]{mishkin2018repeatability}
Dmytro Mishkin, Filip Radenovic, and Jiri Matas.
\newblock Repeatability is not enough: Learning affine regions via
  discriminability.
\newblock In \emph{Proceedings of the European conference on computer vision
  (ECCV)}, pages 284--300, 2018.

\bibitem[Ono et~al.(2018)Ono, Trulls, Fua, and Yi]{ono2018lf}
Yuki Ono, Eduard Trulls, Pascal Fua, and Kwang~Moo Yi.
\newblock {LF-Net}: Learning local features from images.
\newblock \emph{Advances in neural information processing systems}, 31, 2018.

\bibitem[Oquab et~al.(2023)Oquab, Darcet, Moutakanni, Vo, Szafraniec, Khalidov,
  Fernandez, Haziza, Massa, El-Nouby, Howes, Huang, Xu, Sharma, Li, Galuba,
  Rabbat, Assran, Ballas, Synnaeve, Misra, Jegou, Mairal, Labatut, Joulin, and
  Bojanowski]{oquab2023dinov2}
Maxime Oquab, Timothée Darcet, Theo Moutakanni, Huy~V. Vo, Marc Szafraniec,
  Vasil Khalidov, Pierre Fernandez, Daniel Haziza, Francisco Massa, Alaaeldin
  El-Nouby, Russell Howes, Po-Yao Huang, Hu Xu, Vasu Sharma, Shang-Wen Li,
  Wojciech Galuba, Mike Rabbat, Mido Assran, Nicolas Ballas, Gabriel Synnaeve,
  Ishan Misra, Herve Jegou, Julien Mairal, Patrick Labatut, Armand Joulin, and
  Piotr Bojanowski.
\newblock {DINOv2}: Learning robust visual features without supervision.
\newblock \emph{arXiv:2304.07193}, 2023.

\bibitem[Paszke et~al.(2019)Paszke, Gross, Massa, Lerer, Bradbury, Chanan,
  Killeen, Lin, Gimelshein, Antiga, et~al.]{paszke2019pytorch}
Adam Paszke, Sam Gross, Francisco Massa, Adam Lerer, James Bradbury, Gregory
  Chanan, Trevor Killeen, Zeming Lin, Natalia Gimelshein, Luca Antiga, et~al.
\newblock Pytorch: An imperative style, high-performance deep learning library.
\newblock \emph{Advances in neural information processing systems}, 32, 2019.

\bibitem[Pautrat et~al.(2020)Pautrat, Larsson, Oswald, and
  Pollefeys]{pautrat2020lisrd}
Rémi Pautrat, Viktor Larsson, Martin~R. Oswald, and Marc Pollefeys.
\newblock Online invariance selection for local feature descriptors.
\newblock In \emph{Proceedings of the European Conference on Computer Vision
  (ECCV)}, 2020.

\bibitem[Pielawski et~al.(2020)Pielawski, Wetzer, \"{O}fverstedt, Lu,
  W\"{a}hlby, Lindblad, and Sladoje]{pielawski2020comir}
Nicolas Pielawski, Elisabeth Wetzer, Johan \"{O}fverstedt, Jiahao Lu, Carolina
  W\"{a}hlby, Joakim Lindblad, and Nata{\v{s}}a Sladoje.
\newblock {CoMIR}: Contrastive multimodal image representation for
  registration.
\newblock In \emph{Advances in Neural Information Processing Systems}, pages
  18433--18444. Curran Associates, Inc., 2020.

\bibitem[Revaud et~al.(2019)Revaud, De~Souza, Humenberger, and
  Weinzaepfel]{revaud2019r2d2}
Jerome Revaud, Cesar De~Souza, Martin Humenberger, and Philippe Weinzaepfel.
\newblock R2d2: Reliable and repeatable detector and descriptor.
\newblock \emph{Advances in neural information processing systems},
  32:\penalty0 12405--12415, 2019.

\bibitem[Rocco et~al.(2018)Rocco, Cimpoi, Arandjelovi\'c, Torii, Pajdla, and
  Sivic]{rocco2018neighbourhood}
I. Rocco, M. Cimpoi, R. Arandjelovi\'c, A. Torii, T. Pajdla, and J. Sivic.
\newblock Neighbourhood consensus networks.
\newblock In \emph{Proceedings of the 32nd Conference on Neural Information
  Processing Systems}, 2018.

\bibitem[Rublee et~al.(2011)Rublee, Rabaud, Konolige, and
  Bradski]{rublee2011orb}
Ethan Rublee, Vincent Rabaud, Kurt Konolige, and Gary Bradski.
\newblock {ORB: An efficient alternative to SIFT or SURF}.
\newblock In \emph{2011 International conference on computer vision}, pages
  2564--2571. Ieee, 2011.

\bibitem[Santellani et~al.(2023)Santellani, Sormann, Rossi, Kuhn, and
  Fraundorfer]{santellani2023s}
Emanuele Santellani, Christian Sormann, Mattia Rossi, Andreas Kuhn, and
  Friedrich Fraundorfer.
\newblock S-trek: Sequential translation and rotation equivariant keypoints for
  local feature extraction.
\newblock In \emph{Proceedings of the IEEE/CVF International Conference on
  Computer Vision}, pages 9728--9737, 2023.

\bibitem[Serre(1977)]{serreLinearRepresentationsFinite1977}
Jean-Pierre Serre.
\newblock \emph{Linear {{Representations}} of {{Finite Groups}}}.
\newblock {Springer}, 1977.

\bibitem[Shakerinava et~al.(2022)Shakerinava, Mondal, and
  Ravanbakhsh]{shakerinava2022structuring}
Mehran Shakerinava, Arnab~Kumar Mondal, and Siamak Ravanbakhsh.
\newblock Structuring representations using group invariants.
\newblock In \emph{Advances in Neural Information Processing Systems}, pages
  34162--34174. Curran Associates, Inc., 2022.

\bibitem[Stoken and Fisher(2023)]{stoken2023astronaut}
Alex Stoken and Kenton Fisher.
\newblock Find my astronaut photo: Automated localization and georectification
  of astronaut photography.
\newblock In \emph{Proceedings of the IEEE/CVF Conference on Computer Vision
  and Pattern Recognition (CVPR) Workshops}, pages 6196--6205, 2023.

\bibitem[Sun et~al.(2021)Sun, Shen, Wang, Bao, and Zhou]{sun2021loftr}
Jiaming Sun, Zehong Shen, Yuang Wang, Hujun Bao, and Xiaowei Zhou.
\newblock {LoFTR: Detector-free local feature matching with transformers}.
\newblock In \emph{Proceedings of the IEEE/CVF Conference on Computer Vision
  and Pattern Recognition}, pages 8922--8931, 2021.

\bibitem[Tao et~al.(2010)Tao, Vu, and Krishnapur]{tao2010random}
Terence Tao, Van Vu, and Manjunath Krishnapur.
\newblock {Random matrices: Universality of ESDs and the circular law}.
\newblock \emph{The Annals of Probability}, 38\penalty0 (5):\penalty0 2023 --
  2065, 2010.

\bibitem[Tian et~al.(2017)Tian, Fan, and Wu]{tian2017l2}
Yurun Tian, Bin Fan, and Fuchao Wu.
\newblock L2-net: Deep learning of discriminative patch descriptor in euclidean
  space.
\newblock In \emph{Proceedings of the IEEE conference on computer vision and
  pattern recognition}, pages 661--669, 2017.

\bibitem[Tian et~al.(2019)Tian, Yu, Fan, Wu, Heijnen, and
  Balntas]{tian2019sosnet}
Yurun Tian, Xin Yu, Bin Fan, Fuchao Wu, Huub Heijnen, and Vassileios Balntas.
\newblock Sosnet: Second order similarity regularization for local descriptor
  learning.
\newblock In \emph{Proceedings of the IEEE/CVF Conference on Computer Vision
  and Pattern Recognition}, pages 11016--11025, 2019.

\bibitem[Tian et~al.(2020)Tian, Barroso~Laguna, Ng, Balntas, and
  Mikolajczyk]{tian2020hynet}
Yurun Tian, Axel Barroso~Laguna, Tony Ng, Vassileios Balntas, and Krystian
  Mikolajczyk.
\newblock Hynet: Learning local descriptor with hybrid similarity measure and
  triplet loss.
\newblock \emph{Advances in neural information processing systems},
  33:\penalty0 7401--7412, 2020.

\bibitem[Tyszkiewicz et~al.(2020)Tyszkiewicz, Fua, and
  Trulls]{tyszkiewicz2020disk}
Michal~J. Tyszkiewicz, Pascal Fua, and Eduard Trulls.
\newblock {DISK:} learning local features with policy gradient.
\newblock In \emph{NeurIPS}, 2020.

\bibitem[Weiler and Cesa(2019)]{weiler2019general}
Maurice Weiler and Gabriele Cesa.
\newblock General e (2)-equivariant steerable cnns.
\newblock \emph{Advances in neural information processing systems}, 32, 2019.

\bibitem[Woit(2017)]{Woit2017}
Peter Woit.
\newblock \emph{Quantum {{Theory}}, {{Groups}} and {{Representations}}}.
\newblock {Springer International Publishing}, 2017.

\bibitem[Wood and Shawe-Taylor(1996)]{wood1996rep}
Jeffrey Wood and John Shawe-Taylor.
\newblock Representation theory and invariant neural networks.
\newblock \emph{Discrete Applied Mathematics}, 69\penalty0 (1):\penalty0
  33--60, 1996.

\bibitem[Worrall et~al.(2017{\natexlab{a}})Worrall, Garbin, Turmukhambetov, and
  Brostow]{Worall_2017_CVPR}
Daniel~E. Worrall, Stephan~J. Garbin, Daniyar Turmukhambetov, and Gabriel~J.
  Brostow.
\newblock Harmonic networks: Deep translation and rotation equivariance.
\newblock In \emph{Proceedings of the IEEE Conference on Computer Vision and
  Pattern Recognition (CVPR)}, 2017{\natexlab{a}}.

\bibitem[Worrall et~al.(2017{\natexlab{b}})Worrall, Garbin, Turmukhambetov, and
  Brostow]{Worrall_2017_ICCV}
Daniel~E. Worrall, Stephan~J. Garbin, Daniyar Turmukhambetov, and Gabriel~J.
  Brostow.
\newblock Interpretable transformations with encoder-decoder networks.
\newblock In \emph{Proceedings of the IEEE International Conference on Computer
  Vision (ICCV)}, 2017{\natexlab{b}}.

\bibitem[Yi et~al.(2016)Yi, Trulls, Lepetit, and Fua]{yi2016lift}
Kwang~Moo Yi, Eduard Trulls, Vincent Lepetit, and Pascal Fua.
\newblock Lift: Learned invariant feature transform.
\newblock In \emph{Computer Vision--ECCV 2016: 14th European Conference,
  Amsterdam, The Netherlands, October 11-14, 2016, Proceedings, Part VI 14},
  pages 467--483. Springer, 2016.

\end{thebibliography}
}

\clearpage
\clearpage
\setcounter{page}{1}
\maketitlesupplementary

\appendix
\section{Supplementary theory}
\label{app:theory}
We provide further theoretical discussions that did not have room in the main text.
First, Section~\ref{app:disentangle} contains a discussion of what having representations of $C_4$ or $\mathrm{SO}(2)$ on description space means.
Section~\ref{app:proof_gupta} contains a proof of Theorem~\ref{thm:gupta} and Section~\ref{app:matching_strat} presents matching strategies that are considered in the extra ablations of Section~\ref{app:more_experiments} but were omitted from the main paper due to space limitations.

\subsection{Disentangling description space}
\label{app:disentangle}
As explained following Theorem~\ref{thm:c4-rep}, any representation of $C_4$ or $\mathrm{SO}(2)$ can be block-diagonalized over the real numbers into blocks of size $1$ and $2$, called irreducible representations (irreps).
We can think of these irreps as disentangling
descriptions space \cite{cohen14irreps}, i.e.
each eigenspace of the steerer
is acted on by rotations in a specific way according to
the respective irrep.
This section explains the relevance of the different irreps to keypoint descriptors.
For $C_4$, we have the following.
\begin{itemize}
    \item $1\times 1$ irreps $\begin{pmatrix}1\end{pmatrix}$ act by doing nothing.
        Hence, the corresponding dimensions in description space are invariant under rotations.
        For $C_4$, image features described by these dimensions could be crosses or blobs.
    \item $1\times  1$ irreps $\begin{pmatrix}-1\end{pmatrix}$ correspond to dimensions that are invariant under $\ang{180}$ rotations, but not $\ang{90}$ rotations.
        Examples of such image features could be lines.
    \item $2\times 2$ irreps $\imagblock$ correspond to pairs of dimensions that are not invariant under any rotation. Many image features should be of this type, \eg corners.
\end{itemize}
For $\mathrm{SO}(2)$ we get the same $\begin{pmatrix}1\end{pmatrix}$ irrep, which in this case represents features invariant under all rotations such as blobs, and the $2\times 2$ irreps in \eqref{eq:so2_blocks} which represent features rotating with
$j$ times the frequency of the image. E.g. lines rotate with frequency $j=2$ since when we rotate the image by $\ang{180}$, the line returns to its original orientation.

The description for a keypoint does not lie solely in the dimensions of a single irrep. It will be a linear combination of quantities that transform according to the different irreps.
The descriptions can then be viewed as a form of non-linear Fourier decomposition of the image features, as discussed in the literature for general image features. We will provide a short discussion in the next paragraph.
\begin{exmp}
    In Upright SIFT, the decomposition of the $128$ description dimensions is equally split between the irreps, \ie, there are $32$ invariant dimensions, $32$ dimensions that are invariant under $\ang{180}$ degree rotations and $64$ dimensions which are not invariant under any rotation.
\end{exmp}

The connections to Fourier analysis of having a group representation acting on the 
latent space of a model
were discussed in \cite{cohen14irreps} for linear models and
concurrently to this work for neural networks in \cite{koyama2023nft}.
We sketch the main idea here to give the reader some intuition.
If we have a signal $h$ on $\mathbb{R}^n$ and want to know how it transforms under cyclic permutations of the $n$ coordinates, we can take the Discrete Fourier Transform (DFT).
Each coordinate $h_j$ can be written as a linear combination of Fourier basis functions:
$h_j = \sum_{k=0}^{n - 1} \hat h_k \exp(2 \pi \mathbf{i} jk / n)$
and the DFT is simply the vector $\hat h$.
When we cyclically permute $h$ by $J$ steps,
it corresponds to multiplying each component $\hat h_k$ by $\exp(2\pi\mathbf{i}Jk/n)$.
Thus, the cyclic permutation on $h$ acts like a diagonal matrix on the DFT $\hat h$.
The DFT is a linear transform of the signal $h$.
In our setting, the signal consists of images and keypoints transformed by a neural network $f$ to description space.
As described in Theorem~\ref{thm:c4-rep}, 
rotations act by a diagonal matrix in description space (up to a change of basis $Q$).
In the terminology of \cite{koyama2023nft}, we can think of the neural network $f$ as doing a Neural Fourier Transform of the input.

The usefulness of having group representations act on latent spaces in neural networks has been considered in, for instance, \cite{Worrall_2017_ICCV, cohen2014transformation, marchetti2023equivariant}.
In these works, the specific representation is fixed before training the network, similar to our Setting C.
As far as we know, optimally choosing the representation remains an open question---the experiments in this paper showed that this is an important question.
\cite{shakerinava2022structuring} and \cite{gupta2023structuring} considered using \eqref{eq:gupta} as a loss term to obtain orthogonal representations on the latent space.
This approach is also promising for keypoint descriptors, particularly for encoding transformations more complicated than rotations in description space, since it does not require knowledge of the representation theory of the transformation group in question.

\subsection{Proof of Theorem~\ref{thm:gupta}}
\label{app:proof_gupta}
Here, we give a proof of Theorem~\ref{thm:gupta}.
\thmgupta*
\begin{proof}
    Note that if $f(v)=f(v')$, then
    by \eqref{eq:gupta}, $f(\rho_\text{in}(g)v)=f(\rho_\text{in}(g)v')$.
    This means that we, for each $g\in G$, can define a map
    $\tilde\varphi_g:f(V)\to f(V)$ by 
    \begin{equation}\label{eq:tildephi}
        \tilde \varphi_g(w)=f(\rho_\text{in}(g)f^{-1}(w)),
    \end{equation} where $f^{-1}(w)$ is any element that $f$ maps to $w$.
    We next extend $\tilde\varphi_g$ to a map $\varphi_g:\mathbb{S}^{D-1}\to\mathbb{S}^{D-1}$.
    Start by writing any element $w\in\mathbb{S}^{D-1}$ as 
    \begin{equation}
        w=w_\bot + \sum_{i=1}^n a_i w_i,
    \end{equation}
    where $a_i\in\mathbb{R}$, the $w_i$'s are of the form $f(v_i)$ for some $v_i\in V$ and form a basis of $\mathrm{span}(f(V))$ and $w_\bot$ is orthogonal to $\mathrm{span}(f(V))$.
    Define 
    \begin{equation}\label{eq:defphi}
        \varphi_g(w) = w_\bot + \sum_{i=1}^n a_i \tilde\varphi_g(w_i).
    \end{equation}
    We can now use \eqref{eq:gupta} to show that $\varphi_g$ is an isometry of the sphere $\mathbb{S}^{D-1}$, i.e. an orthogonal transformation:
    \begin{align}
        \langle\varphi_g(w), \varphi_g(w')\rangle &= \langle w_\bot, w'_\bot \rangle \\
        &\phantom{=} + \sum_{i=1}^n\sum_{j=1}^n a_i a'_j \langle \tilde\varphi_g(w_i), \tilde\varphi_g(w_j) \rangle \\
        &\stackrel{\substack{\eqref{eq:gupta}\\ \eqref{eq:tildephi}}}{=} \langle w_\bot, w'_\bot \rangle + \sum_{i=1}^n\sum_{j=1}^n a_i a'_j \langle w_i, w_j \rangle \\
        &= \langle w, w' \rangle.
    \end{align}
    As it is an orthogonal transformation, we can write $\varphi_g$ as being a matrix acting on vectors in $\mathbb{S}^{D-1}$ by matrix multiplication.
    Finally, we need to show that $\rho(g) = \varphi_g$ defines a representation of $G$, i.e. that $\varphi_g\varphi_{g'} = \varphi_{gg'}$ for all $g,g'\in G$.
    We begin by showing that $\varphi_g$ and $\tilde\varphi_g$ are
    equal on $f(V)$, which now follows from linearity of $\varphi_g$ as follows.
    Take a general $w\in f(V)$,
    and again write $w=\sum_{i=1}^n a_i w_i$, then
    \begin{align}
        \langle \varphi_g(w), \tilde\varphi_g(w) \rangle 
        &= \left\langle \varphi_g\left(\sum_{i=1}^n a_iw_i\right), \tilde\varphi_g(w)\right\rangle \\
        &= \sum_{i=1}^n a_i \langle \varphi_g(w_i), \tilde\varphi_g(w)\rangle \\
        &\stackrel{\eqref{eq:defphi}}{=} \sum_{i=1}^n a_i \langle \tilde\varphi_g(w_i), \tilde\varphi_g(w)\rangle \\
        &\stackrel{\eqref{eq:gupta}}{=} \sum_{i=1}^n a_i \langle w_i, w \rangle \\
        &= \langle w, w \rangle \\
        &= 1
    \end{align}
    so that $\varphi_g(w) = \tilde\varphi_g(w)$.
    For the $w_i$'s from before, we thus have 
    \begin{align}
        \varphi_{gg'}w_i &= \tilde\varphi_{gg'}(w_i) \\
        &= f(\rho_\text{in}(gg')v_i) \\
        &= f(\rho_\text{in}(g)\rho_\text{in}(g')v_i) \\
        &= \tilde\varphi_g (f(\rho_\text{in}(g')v_i)) \\
        &= \tilde\varphi_g (\tilde\varphi_{g'} (f(v_i))) \\
        &= \tilde\varphi_g (\tilde\varphi_{g'} (w_i)) \\
        &= \tilde\varphi_g (\varphi_{g'}(w_i)) \\
        &= \varphi_g \varphi_{g'} w_i.
    \end{align}
    Further, for any $w_\bot$ orthogonal to $\mathrm{span}(f(V))$ we have
    \begin{equation}
        \varphi_{gg'}w_\bot = w_\bot = \varphi_g\varphi_{g'} w_\bot.
    \end{equation}
    Thus by linearity $\varphi_{gg'} = \varphi_{g}\varphi_{g'}$.
\end{proof}

\subsection{More matching strategies}
\label{app:matching_strat}
We discuss more potential matching strategies.
Their performance is shown in the large ablation Table~\ref{tab:megadepth_large}.

\begin{enumerate}[font=\bfseries, wide, labelwidth=!, labelindent=0pt]
\item[Projecting to the invariant subspace.]
Given a steerer $\rho(\mathbf{g})$, we can project to the rotation invariant subspace of
the descriptions by taking $\sum_{k=0}^3\rho(\mathbf{g})^k y / 4$ as descriptions instead of $y$.
Equivalently, one can project by decomposing $\rho(\mathbf{g})$ using \eqref{eq:eigdecom_generator}.
However, we will see that these invariant descriptions do not perform very well (but still better than just using $y$).
This is likely because the invariant subspace is typically only a fourth of the
descriptor space.
\item[Subset matcher.] We can estimate the best relative rotation between two images using the \emph{max matches} matching strategy on only a subset of the keypoints in each image.
The obtained rotation is then used to steer the descriptions of all keypoints.
This \emph{subset matcher} strategy gives lower runtime while not sacrificing performance much.
In our experiments, we use $1,000$ keypoints.
\item[Prototype Procrustes matcher.]
For frequency 1 descriptors, as a way to make the Procrustes matcher less computationally expensive,
we propose, instead of aligning each description pair optimally,
to align every description to a prototype description $\tilde y\in\mathbb{R}^{2\times (D/2)}$.
Thus, we solve the Procrustes problem once per description in each image
to obtain $2N$ rotation matrices $R_{1,m}$ and $R_{2,n}$ and form the matching matrix
with elements $\langle\mathtt{flatten}(R_{1, m}y_{1,m}), \mathtt{flatten}(R_{2,n}y_{2,n})\rangle$.
$\tilde y$ can be obtained by optimizing it over a subset of the training set for a fixed frequency 1 descriptor. 
This strategy is similar to the group alignment proposed in RELF~\cite{lee2023learning}; however, there, the alignment is done using a single feature in a permutation representation of $C_{16}$, whereas we look at the entire $D$-dimensional description.
Similarly to RELF, we could use only specific dimensions of the descriptions for alignment and add a loss for this during training. 
However, we leave this and a careful examination of optimal alignment strategies for future work.
\end{enumerate}

\section{More experiments}
\label{app:more_experiments}
\begin{figure*}
    \centering
    \includegraphics[width=0.9\linewidth]{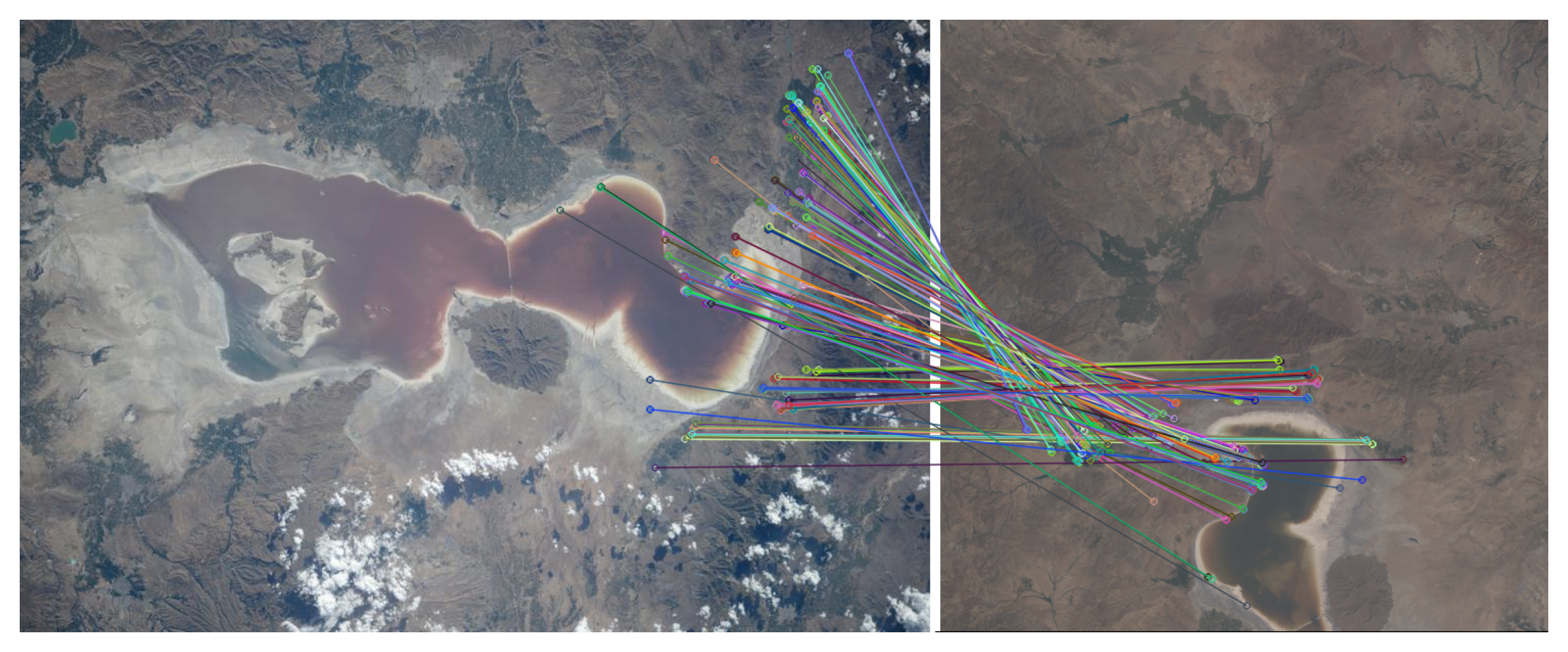}
    \includegraphics[width=0.9\linewidth]{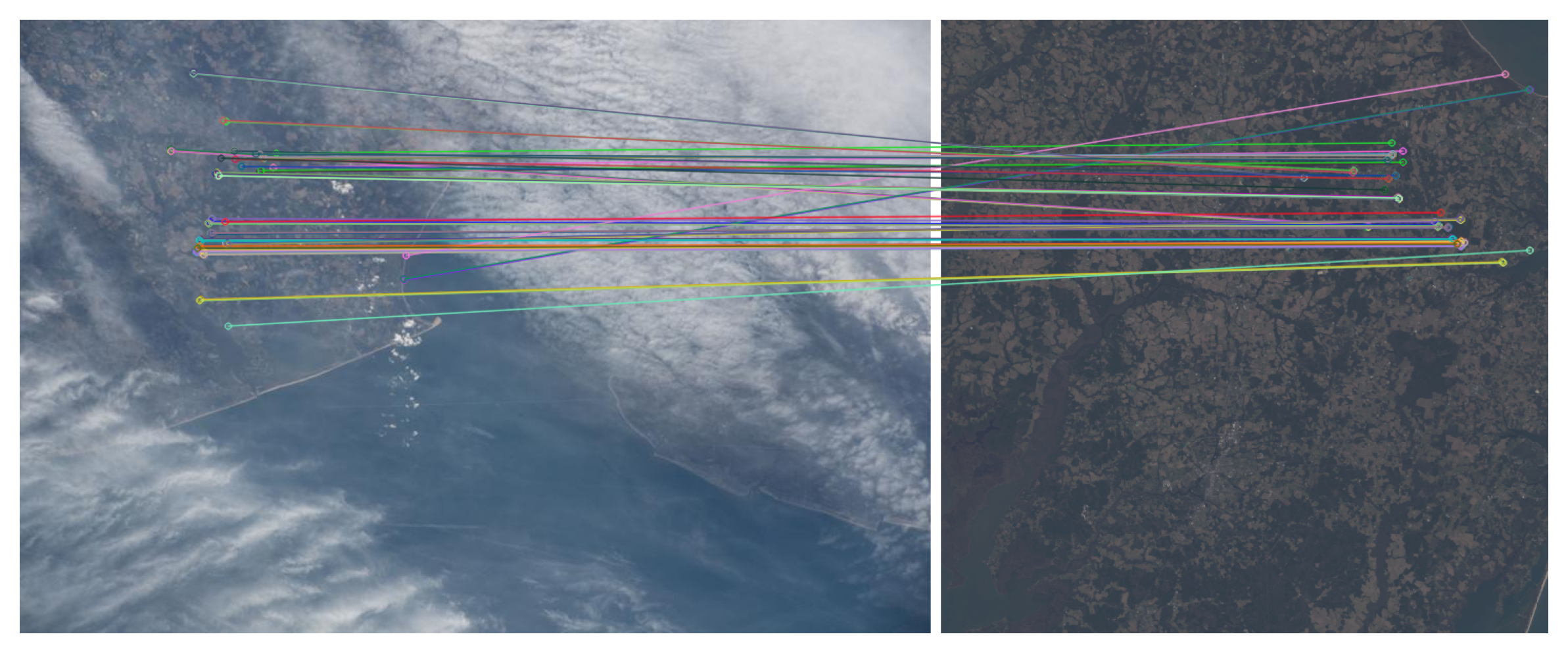}
    \includegraphics[width=0.9\linewidth]{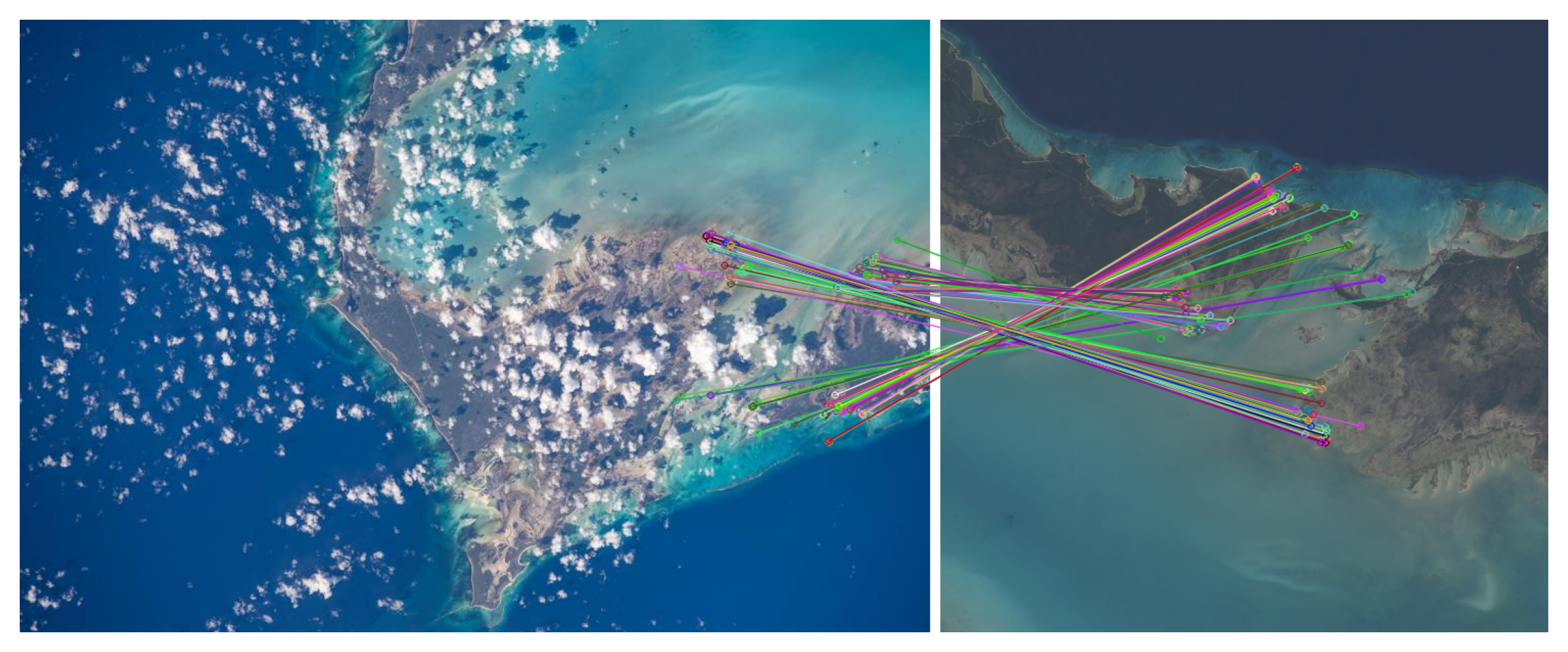}
    \caption{More qualitative challenging matching examples from the AIMS data.}
    \label{fig:aims-more}
\end{figure*}
\begin{figure*}
    \centering
    \includegraphics[width=0.9\linewidth]{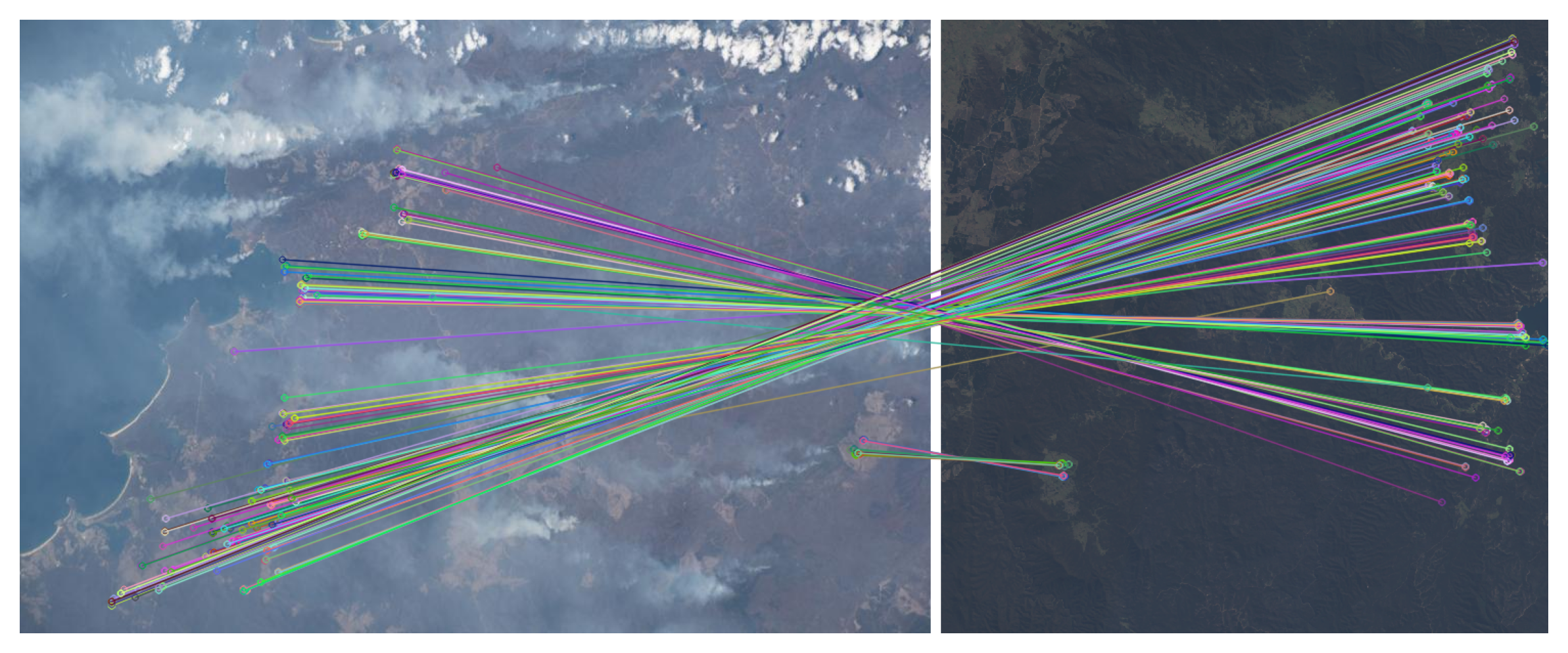}
    \includegraphics[width=0.9\linewidth]{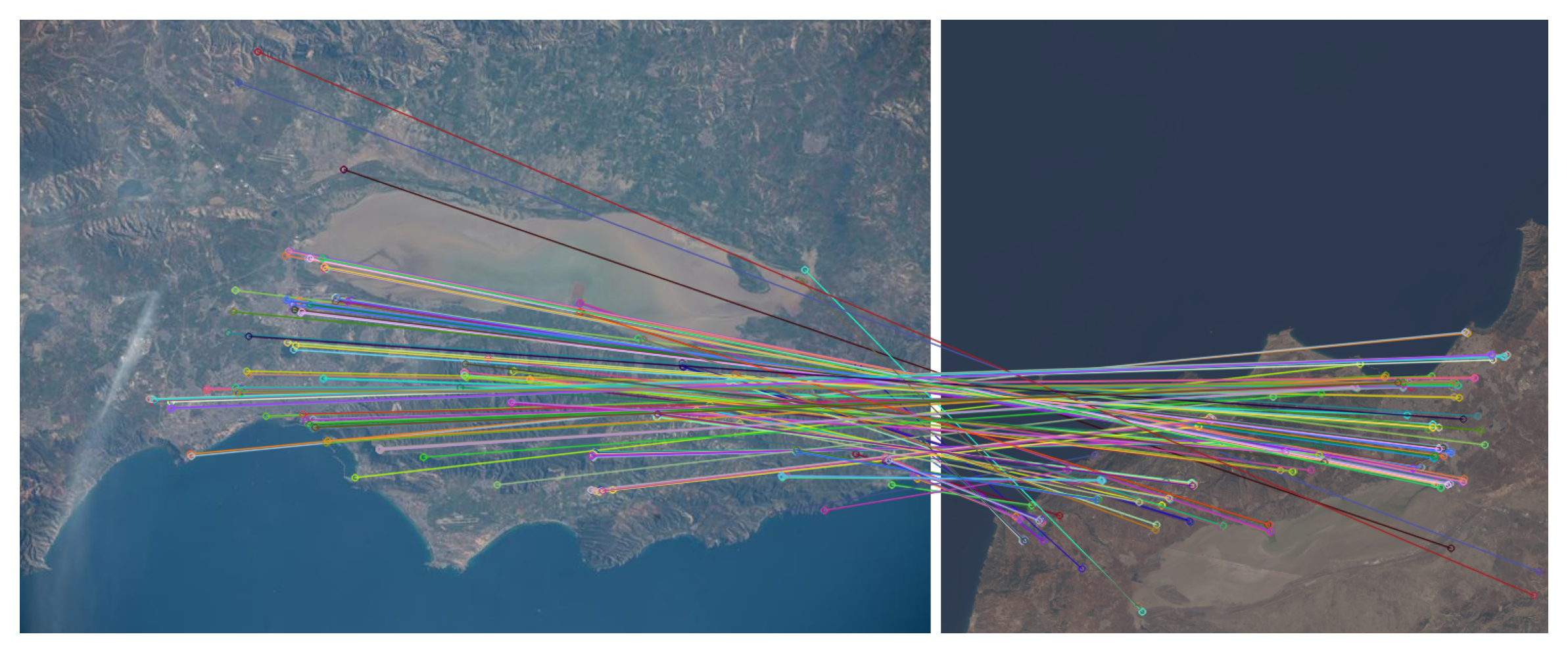}
    \includegraphics[width=0.9\linewidth]{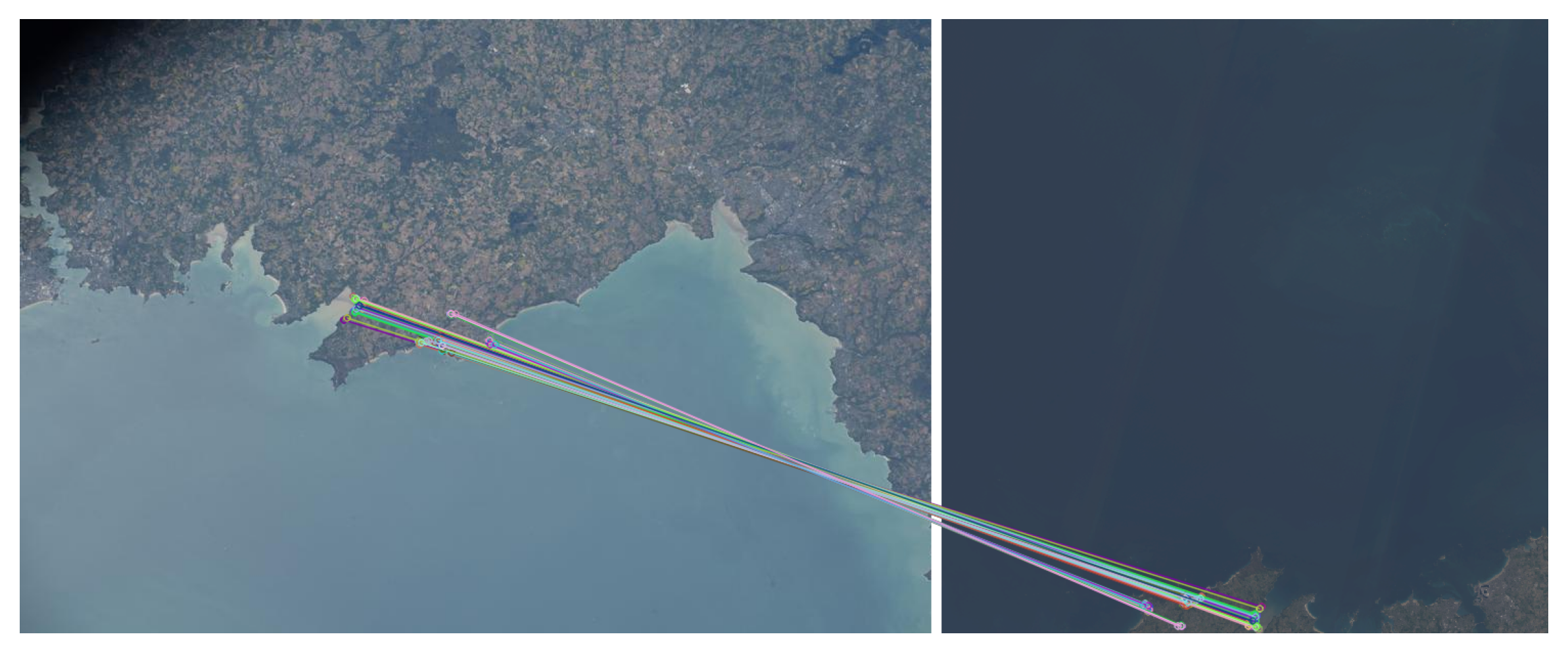}
    \caption{More qualitative challenging matching examples from the AIMS data.}
    \label{fig:aims-more2}
\end{figure*}
We show further matching examples on AIMS in Figures~\ref{fig:aims-more} and \ref{fig:aims-more2}.
These examples were chosen by selecting pairs with 20-200 matches after RANSAC, corresponding to successful but challenging pairs.
In the remainder of this section, we present ablations that did not have room in the main paper.
A large results table is provided as Table~\ref{tab:megadepth_large}.
Further, we present an experiment in support of using Theorem~\ref{thm:gupta} to motivate the existence of steerers in Section~\ref{app:equal_rot}
and a comparison to test time rotation augmentation in terms of performance and runtime in Section~\ref{app:tta}

Figure~\ref{fig:steer-rot2} shows an example of the improvement obtained using a steerer for large rotations.
Figure~\ref{fig:aims-recall-precision} shows the recall-precision curves for the experiments on AIMS from Section~\ref{sec:aims}.

\subsection{Equal rotation augmentation}
\label{app:equal_rot}
We explained the spontaneous equivariance of DeDoDe-B by referring to Theorem~\ref{thm:gupta} and saying that descriptors will be equivariant if the performance is equivalent for matching images $I_1$ and $I_2$ as matching jointly rotated images $P_{90}^kI_1$ and $P_{90}^kI_2$.
To test this explanation, we can look at whether the equivariance of a
keypoint descriptor relates to how good it is at matching jointly rotated images.

We experiment with four different descriptors, DeDoDe-B, DeDoDe-G, DISK~\cite{tyszkiewicz2020disk} and a retrained DeDoDe-B with data augmentation where both images are rotated an equal multiple of $\ang{90}$.
This retrained version is denoted DeDoDe-B$^\dag$.
The results are shown in Table~\ref{tab:joint_rot} and show that DeDoDe-B and DISK, for which the dropoff in performance between upright and jointly rotated images is relatively low, the steered performance is relatively high.
Conversely, DeDoDe-G has a large dropoff in performance between upright and jointly rotated images, and it also has a worse-performing steerer.
Finally, the retrained DeDoDe-B$^\dag$, trained to perform well on jointly rotated images, has a more or less perfect steerer.

\subsection{Comparison to test time augmentation}
\label{app:tta}
Given two images with an unknown relative rotation, the best obtainable matches from test time augmentation would be obtained when rotating the first image to have the same rotation from upright as the second, which is the case in the joint rotation benchmark.
The joint rotation benchmark considered in the previous section hence gives an upper bound for how well test time augmentation can work.
We also include the results of using C4-TTA with ordinary DeDoDe-B in Table~\ref{tab:megadepth_large}.
Therefore, Tables~\ref{tab:joint_rot} and \ref{tab:megadepth_large} show that using test time augmentation can give higher performance than a steerer in Setting A (Section~\ref{sec:settings}) of optimizing a steerer given a fixed descriptor.
The steerers obtained in Settings B and C, however, clearly outperform test time augmentation for the original DeDoDe networks (compare Table~\ref{tab:megadepth_large} and Table~\ref{tab:joint_rot}).
Table~\ref{tab:runtime} presents the improved runtime of using steerers.

\section{Experimental details}
\label{app:experiment_details}
We use the publicly available training code from DeDoDe~\cite{edstedt2024dedode} to train our models.
In Setting~A, we train the steerer for 10k iterations with a learning rate of $0.01$.
In Setting~B, we set the learning rate of the steerer to $2\cdot 10^{-4}$, which is the same as for the decoder in \cite{edstedt2024dedode} and train for 100k iterations as in \cite{edstedt2024dedode}.
In Setting~C, we also train for 100k iterations.
All other hyperparameters are identical to \cite{edstedt2024dedode}.

\subsection{How to initialize/fix the steerer}
Theorem~\ref{thm:gupta} tells us that the representation acting
on the description space (\ie the steerer) should be orthogonal.
Further, since we match using cosine similarity, we can perform an orthogonal change of basis in description space without influencing matching.
Thus, using the representation theory described in Section~\ref{sec:equiv_steer}, we can always change the basis of description space so that the steerer is block-diagonal with blocks of size 1 and 2.
Next, we describe the exact forms of steerers in our experiments when using different initializations or fixed steerers.
The labels correspond to the ones described in Section~\ref{sec:models_considered}.
Again we have two different cases depending on whether we have a $C_4$ steerer $\rho(\mathbf{g})$ or a $\mathrm{SO}(2)$ steerer obtained from a Lie algebra generator $\mathrm{d}\varsigma$.
\begin{enumerate}[font=\bfseries, wide, labelwidth=!, labelindent=0pt]
    \item[Inv.] Here, the steerer is simply the identity matrix.
    \item[Freq1.] We set the steerer $\rho(\mathbf{g})$ or the Lie algebra generator $\mathrm{d}\varsigma$ to 
        \begin{equation}
            \bigoplus_{b=1}^{128} \imagblock.
        \end{equation}
        Each block has eigenvalues $\pm\mathbf{i}$, so we get $128$ of each.
    \item[Perm.] We set the steerer $\rho(\mathbf{g})$ to 
        \begin{equation}
            \bigoplus_{b=1}^{64} \begin{pmatrix}
                0 & 1 & 0 & 0 \\
                0 & 0 & 1 & 0 \\
                0 & 0 & 0 & 1 \\
                1 & 0 & 0 & 0
            \end{pmatrix}
        \end{equation}
        Each block has eigenvalues $\pm 1, \pm \mathbf{i}$, so we get $64$ of each.
    \item[Spread.] We set the Lie algebra generator $\mathrm{d}\varsigma$ to
        \begin{equation}
            \left(
            \bigoplus_{b=1}^{40} \begin{pmatrix}
                0 
            \end{pmatrix}
            \right)
            \oplus
            \left(
            \bigoplus_{j=1}^{6}
            \left(
            \bigoplus_{b=1}^{18} \begin{pmatrix}
                0 & -j \\
                j & 0
            \end{pmatrix}
            \right)
            \right)
        \end{equation}
        Here, the first $40\times 40$ zero matrix gives $40$ eigenvalues $0$, the remaining blocks give $18$ eigenvalues of each $\pm j\mathbf{i}$ for $j=1,2,3,4,5,6$.
\end{enumerate}

\subsection{AIMS details}
In contrast to \cite{stoken2023astronaut}, we compute the average precision over the entire precision-recall curve instead of at fixed thresholds of the number of inliers.
The thresholds used in \cite{stoken2023astronaut} were chosen to approximately cover the precision-recall curve for the methods considered there. Still, we find using the complete precision-recall curve easier than rescaling the thresholds for our methods.
Furthermore, in \cite{stoken2023astronaut}, the average precision was computed with a maximum of 100 negative satellite images per astronaut photo.
Instead, for each astronaut photo, we use all associated satellite images in the AIMS to get more accurate precision scores.
These changes were agreed upon with the authors of \cite{stoken2023astronaut}.

We use only the Scale-1 subset of AIMS as we aim to evaluate rotational robustness.
Following \cite{stoken2023astronaut}, we resize all images so that the smallest side is 576px during matching.
For homography estimation, we use OpenCV with flag \texttt{USAC\_MAGSAC} \cite{barath2020magsac++} with confidence $0.999$, max iterations $10,000$ and inlier threshold $5$ pixels.
These settings were given to us by the authors of \cite{stoken2023astronaut}.
We rerun SE2-LoFTR to compute the average precision as described above. We confirmed that we get approximately the same score using the old evaluation protocol for SE2-LoFTR as reported in \cite{stoken2023astronaut} (they report $0.62$ on Upright and $0.51$ on All Others, while we get $0.60$ and $0.52$ respectively).

\newpage
\begin{figure}
        \centering
    \includegraphics[width=0.9\columnwidth]{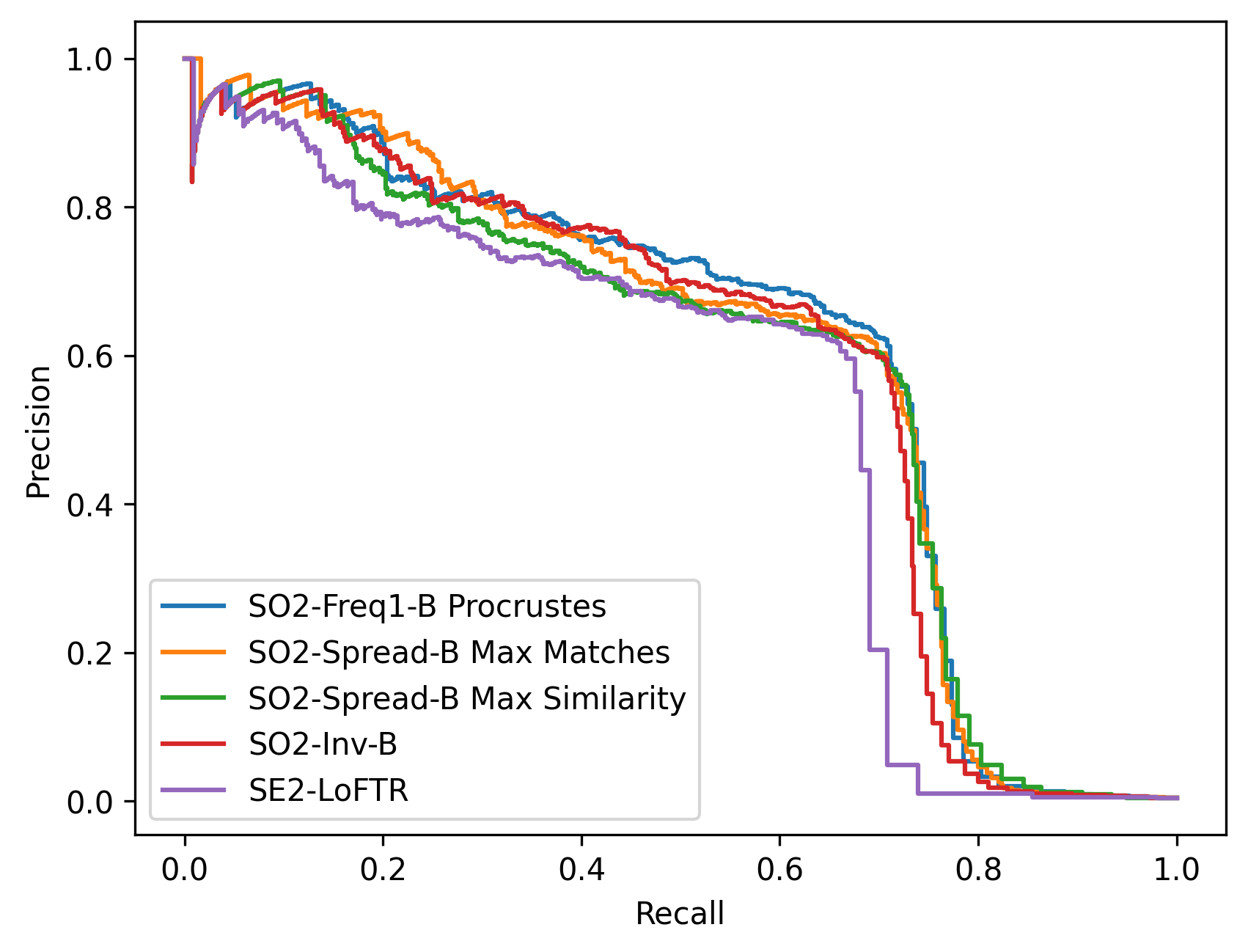}
    \caption{\textbf{Precision-recall on AIMS.} We plot precision-recall curves over the complete AIMS.}
    \label{fig:aims-recall-precision}
\end{figure}
\begin{table}
 \small
     \centering
     \caption{\textbf{Runtime comparison}. 
     We report the mean runtime over 100 random image tensors for description and matching on a single A100 GPU. I.e. the time for loading images and detection of keypoints is not measured.
     We use resolution $784\times 784$ and $5,000$ keypoints throughout.}
     \begin{tabular}{
        ll
        r
     }

     \toprule
      Descriptor & Matching strategy & \multicolumn{1}{l}{Time [ms]}
      \\
    
    \midrule

    DeDoDe-B & Dual softmax &  $63.4 \pm 0.03$ \\
    DeDoDe-B & Dual softmax + TTAx4 & $254.2 \pm 0.10$ \\
    DeDoDe-B & Dual softmax + TTAx8 & $513.0 \pm0.09$ \\
    DeDoDe-B-C4 & Max matches C4-steered & $96.5 \pm0.05$ \\
    DeDoDe-B-C4 & Subset C4-steered & $68.4 \pm0.10$ \\
    DeDoDe-B-C4 & Max similarity C4-steered & $66.9 \pm0.02$ \\
    DeDoDe-B-SO2 & Max matches C8-steered & $141.4 \pm0.05$ \\
    DeDoDe-B-SO2 & Subset C8-steered & $73.4 \pm0.13$ \\
    DeDoDe-B-SO2 & Max similarity C8-steered & $72.4 \pm0.02$ \\
    DeDoDe-B-SO2 & Procrustes & $95.0 \pm0.04$ \\
    DeDoDe-B-SO2 & Prototype Procrustes & $63.9 \pm0.02$ \\
    \midrule
    DeDoDe-G & Dual softmax & $217.3 \pm0.11$ \\
    DeDoDe-G & Dual softmax + TTAx4 & $872.5 \pm0.11$ \\
    DeDoDe-G-C4 & Max matches C4-steered & $250.4 \pm0.09$ \\
    DeDoDe-G-C4 & Subset C4-steered & $222.9 \pm0.08$ \\
    DeDoDe-G-C4 & Max similarity C4-steered & $221.3 \pm0.05$ \\
     \bottomrule
     \end{tabular}
     \label{tab:runtime}
\end{table}
\begin{table*}
 \small
     \centering
     \caption{\textbf{Evaluation on MegaDepth extended}. 
     This is a larger version of Table~\ref{tab:megadepth}.
     The first section shows Setting A where we only optimize the steerer, the second section shows Setting~B where we jointly optimize the descriptor and steerer and the third section shows Setting~C where we predefine the steerer and optimize only the descriptor.
     For MegaDepth-1500 we always use dual softmax matcher to evaluate the descriptors on upright images, except when the matching strategy is marked by $^*$, in which case we use the specified matching strategy for MegaDepth-1500 as well.
     We use $20,000$ keypoints throughout. The best values for \textcolor{Blue}{B}- and \textcolor{OliveGreen}{G}-models are highlighed in each column.  See Section~\ref{sec:models_considered} for shorthand explanations for our models.}
     \begin{tabular}{
        lll
        rrr 
        rrr
        rrr
     }
     \toprule
      Detector & Descriptor & & \multicolumn{3}{l}{\phantom{AUC @ }MegaDepth-1500} &  \multicolumn{3}{l}{MegaDepth-C4} &  \multicolumn{3}{l}{MegaDepth-SO2} \\ 
      DeDoDe & DeDoDe & Matching strategy & AUC $@$
      ~$5^{\circ}$&$10^{\circ}$&$20^{\circ}$ & 
      ~$5^{\circ}$&$10^{\circ}$&$20^{\circ}$ & 
      ~$5^{\circ}$&$10^{\circ}$&$20^{\circ}$\\
    \midrule
         Original & B & Dual softmax & 
            49 & 65 & 77 &
            12 & 17 & 20 &
            12 & 16 & 20
        \\
         Original & B & Dual softmax + TTA C4 & 
            \dittotikz & \dittotikz & \dittotikz &
            46 & 61 & 73 &
            34 & 49 & 61
        \\
         Original & B & Max matches C4-steered & 
            \dittotikz & \dittotikz & \dittotikz & 
            43 & 60 & 73 &
            30 & 44 & 56 
        \\
        C4 & B & Max matches C4-steered & 
            50 & 66 & 78 &
            43 & 60 & 74 &
            30 & 44 & 56
        \\
        C4 & B & Project to invariant subspace$^*$ &
            39 & 55 & 68 &
            33 & 49 & 62 &
            18 & 31 & 45
        \\
        SO2 & B & Max matches C4-steered & 
            50 & 66 & 78 &
            44 & 61 & 74 &
            30 & 45 & 58
        \\
        SO2 & B & Max matches C8-steered & 
            50 & 66 & 78 &
            40 & 57 & 70 &
            34 & 51 & 65
        \\
         Original & G & Dual softmax & 
            \textbf{\textcolor{OliveGreen}{52}} & \textbf{\textcolor{OliveGreen}{69}} & \textbf{\textcolor{OliveGreen}{81}} &
            13 & 17 & 21 &
            16 & 22 & 28
        \\
         Original & G & Max matches C4-steered & 
            \dittotikz & \dittotikz & \dittotikz & 
            31 & 45 & 57 &
            26 & 39 & 50 
        \\
    \midrule
        C4 & C4-B & Max matches C4-steered &
            \textbf{\textcolor{Blue}{51}} & \textbf{\textcolor{Blue}{67}} & \textbf{\textcolor{Blue}{79}} &
            \textbf{\textcolor{Blue}{50}} & \textbf{\textcolor{Blue}{67}} & \textbf{\textcolor{Blue}{79}} &
            39 & 55 & 68 
        \\
        C4 & C4-B & Subset C4-steered &
            \dittotikz & \dittotikz & \dittotikz &
            50 & 66 & 78 &
            39 & 54 & 68 
        \\
        C4 & C4-B & Max similarity C4-steered &
            50 & \textbf{\textcolor{Blue}{67}} & \textbf{\textcolor{Blue}{79}} &
            49 & 65 & 78 &
            35 & 50 & 62 
        \\
        SO2 & SO2-B & Max matches C8-steered &
            47 & 63 & 76 &
            47 & 63 & 76 &
            44 & 61 & 74 
        \\
        SO2 & SO2-Spread-B & Max matches C8-steered &
            50 & 66 & \textbf{\textcolor{Blue}{79}} &
            49 & 66 & 78 &
            \textbf{\textcolor{Blue}{46}} & \textbf{\textcolor{Blue}{63}} & \textbf{\textcolor{Blue}{76}} 
        \\
        SO2 & SO2-Spread-B & Subset C8-steered &
            \dittotikz & \dittotikz & \dittotikz &
            49 & 65 & 78 &
            \textbf{\textcolor{Blue}{46}} & 62 & 75 
        \\
        SO2 & SO2-Spread-B & Max similarity C8-steered &
            49 & 66 & 78 &
            47 & 64 & 77 &
            43 & 61 & 74 
        \\
    \midrule
        C4 & C4-Inv-B & Dual softmax &
            48 & 64 & 76 &
            47 & 63 & 76 &
            39 & 55 & 69 
        \\
        C4 & C4-Perm-B & Max matches C4-steered &
            50 & \textbf{\textcolor{Blue}{67}} & \textbf{\textcolor{Blue}{79}} &
            \textbf{\textcolor{Blue}{50}} & 66 & \textbf{\textcolor{Blue}{79}} &
            39 & 54 & 67 
        \\
        C4 & C4-Freq1-B & Max matches C4-steered &
            49 & 66 & 78 &
            49 & 65 & 78 &
            36 & 51 & 64 
        \\
        SO2 & SO2-Inv-B & Dual softmax &
            46 & 62 & 75 &
            45 & 61 & 74 &
            43 & 60 & 73 
        \\
        SO2 & SO2-Freq1-B & Max matches C8-steered &
            47 & 64 & 77 &
            47 & 64 & 76 &
            45 & 62 & 75 
        \\
        SO2 & SO2-Freq1-B & Procrustes &
            47 & 64 & 76 &
            46 & 62 & 75 &
            45 & 61 & 74 
        \\
        SO2 & SO2-Freq1-B & Prototype Procrustes &
            44 & 61 & 74 &
            43 & 60 & 73 &
            41 & 58 & 72 
        \\
        C4 & C4-Perm-G & Max matches C4-steered &
            \textbf{\textcolor{OliveGreen}{52}} & \textbf{\textcolor{OliveGreen}{69}} & \textbf{\textcolor{OliveGreen}{81}} &
            \textbf{\textcolor{OliveGreen}{53}} & \textbf{\textcolor{OliveGreen}{69}} & \textbf{\textcolor{OliveGreen}{82}} &
            \textbf{\textcolor{OliveGreen}{44}} & \textbf{\textcolor{OliveGreen}{61}} & \textbf{\textcolor{OliveGreen}{74}} 
        \\
        C4 & C4-Perm-G & Subset C4-steered &
            \dittotikz & \dittotikz & \dittotikz &
            52 & \textbf{\textcolor{OliveGreen}{69}} & 81 &
            43 & 60 & 73
        \\
     \bottomrule
     \end{tabular}
     \label{tab:megadepth_large}
\end{table*}
\begin{table*}
 \small
     \centering
     \caption{\textbf{Performance on jointly rotated images vs steerer performance}. 
     We evaluate three descriptors on image pairs where both images are rotated an equal multiple of $\ang{90}$ from upright.
     This gives an upper bound on how good the performance of test time augmentation can be.
     We compare to the performance of a steerer trained for the fixed descriptor (Setting A).
     Finally we show the performance of a descriptor DeDoDe-B$^\dag$ which is trained with data augmentation with jointly rotated images.
     For DeDoDe-B$^\dag$ we also use Setting A, so it is trained without a steerer and then a steerer is trained with the fixed descriptor.
     We use $20,000$ DeDoDe keypoints throughout.
     }
     \begin{tabular}{
        l
        rrr
        rrr
        rrr
     }
     \toprule
      & & & & \multicolumn{3}{l}{MegaDepth-1500} &  \multicolumn{3}{l}{MegaDepth-C4}  \\ 
      & \multicolumn{3}{l}{\phantom{AUC @ }MegaDepth-1500} & \multicolumn{3}{r}{joint rotation} &  \multicolumn{3}{r}{with steerer~~~~}  \\ 
      Descriptor & AUC $@$
      ~$5^{\circ}$&$10^{\circ}$&$20^{\circ}$ & 
      ~$5^{\circ}$&$10^{\circ}$&$20^{\circ}$ & 
      ~$5^{\circ}$&$10^{\circ}$&$20^{\circ}$\\
    \midrule
         DeDoDe-B \cite{edstedt2024dedode} & 
            49 & 65 & 77 &
            46 & 62 & 74 &
            43 & 60 & 73 
        \\
         DeDoDe-G \cite{edstedt2024dedode} & 
            52 & 69 & 81 & 
            45 & 61 & 74 &
            31 & 45 & 57
        \\
        DISK \cite{tyszkiewicz2020disk} &
            34 & 49 & 62 &
            29 & 45 & 58 &
            26 & 41 & 54 
        \\
        \midrule
         DeDoDe-B$^\dag$ & 
            50 & 66 & 78 &
            50 & 66 & 78 &
            50 & 66 & 78 
        \\
        \midrule
     \bottomrule
     \end{tabular}
     \label{tab:joint_rot}
\end{table*}
\begin{figure*}
    \centering
    \includegraphics[width=0.7\linewidth]{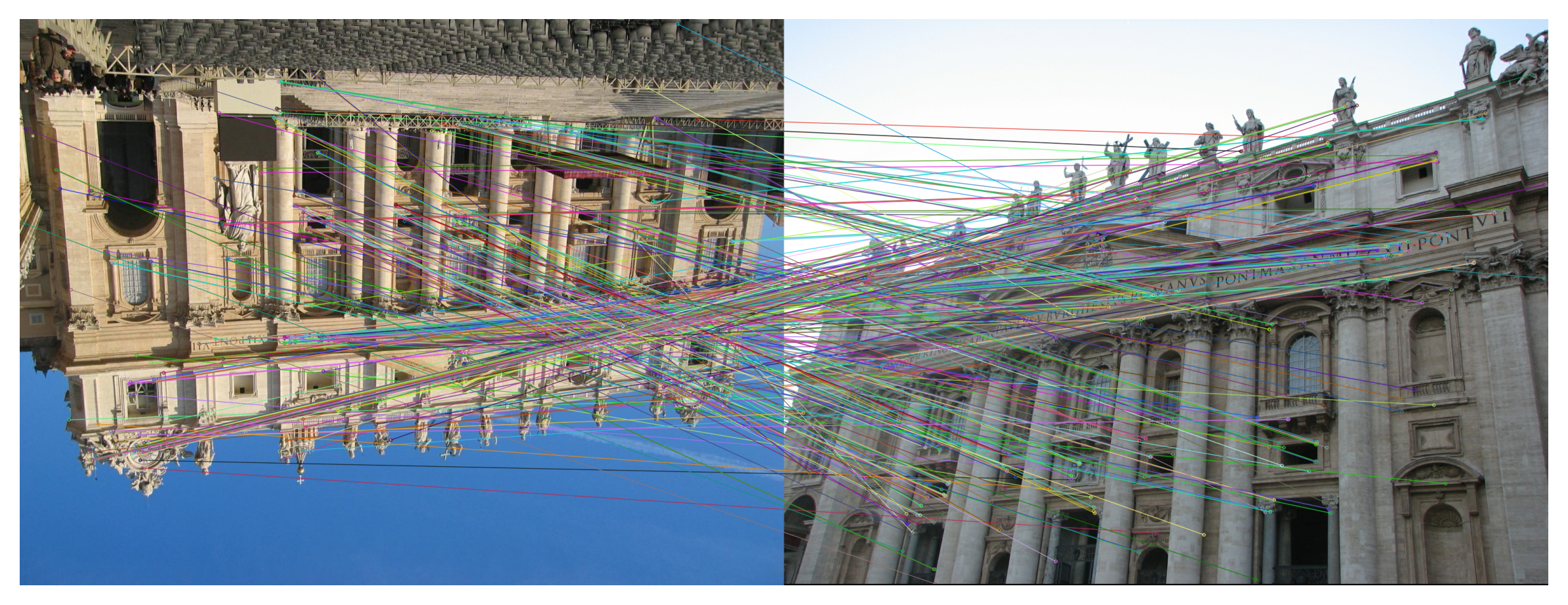}
    \\
    \includegraphics[width=0.7\linewidth]{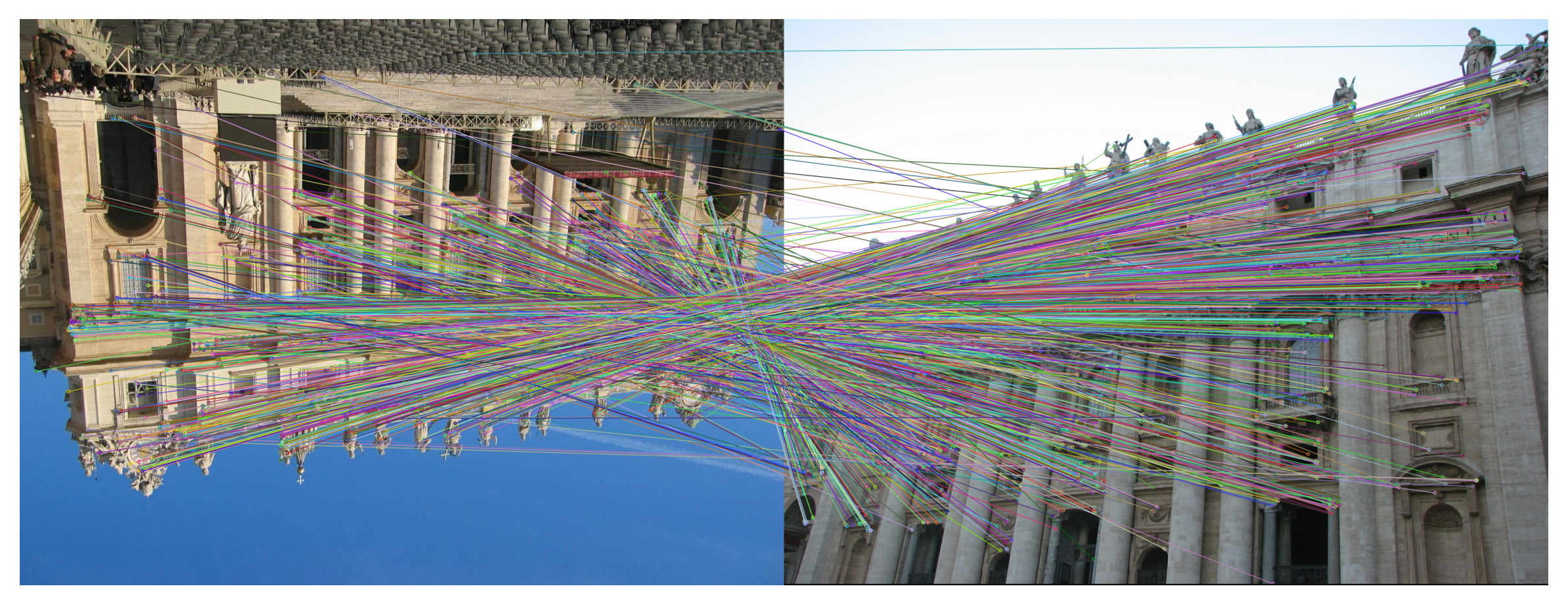}
    \caption{\textbf{Steering DeDoDe~descriptions under half turn rotations.} We replicate \cite[Figure~7]{edstedt2024dedode} but with a steerer. In the upper image pair, we match the ordinary DeDoDe-descriptions. In the lower image pair, we instead modify the descriptions of the keypoints in the right image by multiplying them by a steering matrix $\rho(\mathbf{g})^2$.
    This corresponds to setting A, where
    we, for a fixed descriptor, have optimized a steerer.
    }
    \label{fig:steer-rot2}
\end{figure*}

\end{document}